\newtheorem{theorem}{Theorem}
\newtheorem{corollary}{Corollary}
\newtheorem{lemma}{Lemma}
\newtheorem{definition}{Definition}
\newtheorem{remark}{Remark}
\newtheorem{proposition}{Proposition}
\newcommand{\lp}{\left(}
\newcommand{\rp}{\right)}
\newcommand{\argmin}{\ensuremath{ \text{argmin} }}
\newcommand{\rank}{\ensuremath{ \text{rank} }}
\begin{document}
\title{Privacy-Utility Trade-off of Linear Regression under Random Projections and Additive Noise} 

\author{%
  \IEEEauthorblockN{Mehrdad Showkatbakhsh} 
  \IEEEauthorblockA{UCLA, Los Angeles, CA\\
                    \textsf{mehrdadsh@ucla.edu}}
                    \and
  \IEEEauthorblockN{Can Karakus} 
  \IEEEauthorblockA{UCLA, Los Angeles, CA\\
                    \textsf{karakus@ucla.edu}}
                    \and
                      \IEEEauthorblockN{Suhas Diggavi} 
  \IEEEauthorblockA{UCLA, Los Angeles, CA\\
                    \textsf{suhasdiggavi@ucla.edu}}
}
\maketitle

\begin{abstract}
Data privacy is an important concern in machine learning,
and is fundamentally at odds with the task of training useful learning
models, which typically require acquisition of large amounts of
private user data. One possible way of fulfilling the machine learning
task while preserving user privacy is to train the model on a
transformed, noisy version of the data, which does not reveal the data
itself directly to the training procedure. In this work, we analyze
the privacy-utility trade-off of two such schemes for the problem of
linear regression: additive noise, and random projections. In contrast to previous work, we consider
a recently proposed notion of differential privacy that is based on
conditional mutual information (MI-DP), which is stronger than the conventional
$(\epsilon, \delta)$-differential privacy, and use relative objective
error as the utility metric. We find that projecting the data to a
lower-dimensional subspace before adding noise attains a better
trade-off in general. We also make a connection between privacy problem and (non-coherent) {\color{black} SIMO}, which has been extensively studied
in wireless communication, and use tools from there for the analysis. We
present numerical results demonstrating the performance of the
schemes.
\end{abstract}


\section{Introduction}

High-complexity models are needed to solve modern learning problems, which require large amounts of data to achieve low generalization error. However, acquiring such data from users directly compromises the user privacy. Training useful machine learning models without compromising user privacy is an important and challenging research problem. One natural way to tackle this problem is to keep the data itself private, and reveal only a processed, noisy version of the data to the training procedure. Ideally, such processing would completely hide the content of the data samples, while still providing useful information to the training objective. In this paper, we analyze the privacy-utility trade-off of two such schemes for the linear regression problem: additive noise, where training is performed on the data samples with additive Gaussian noise; and random projections, where each data sample is randomly projected to a lower-dimensional subspace through Johnson-Lindenstrauss Transform (JLT) \cite{Vempala2005} before adding {\color{black} Gaussian} noise. We explore guarantees for a model that is trained on such transformed data for a given privacy constraint.

Differential privacy is perhaps the most well-known notion for privacy \cite{Dwork2014}, and has been applied to a variety of domains (we refer reader to \cite{Sarwate2013} and \cite{Dwork2014} and references therein). It assumes a strong adversary which has access to all data samples except one, thereby ensuring robustness of the privacy guarantee to adversaries with side-information about the database. Moreover differential privacy makes no distributional assumption on the data.

In this work we use the recently proposed notion of mutual information-differential privacy (MI-DP) to analyze the privacy performance of the schemes. This connects to the natural information-theoretic notion of privacy, as well as enabling the use of more standard tools for analysis. Moreover it is shown in \cite{Cuff_MI} that MI-DP directly implies $(\epsilon, \delta)$-differential privacy. 

Our contributions are as follows: First, we derive closed-form expressions on the relative objective error achievable by additive noise (Theorem \ref{noisy:thm}) and random projection schemes (Theorem \ref{jlt:thm}), under a privacy constraint, and show that in general random projections achieve better privacy-utility trade-off. We use results from randomized linear algebra \cite{Pilanci} to prove the utility guarantees. Second, using the \mbox{MI-DP} measure, and using the fact that the random projection matrix is private, we make a connection between the MI-DP and SIMO channel, and show that non-coherent SIMO bounds do not give a stronger scaling guarantee than their coherent counterparts. Third, we present numerical results demonstrating the performance of the two schemes. 

\noindent\textbf{Related work.} The works in \cite{Bassily2014, Chaudhuri2011, Kifer2012} propose perturbing the objective to provide privacy guarantees on the trained model, where the training procedure is trusted and has access to the full database, and the adversary can only access the resulting trained model. In contrast, we assume that the training procedure itself may be adversarial, and is not given access to the raw data samples. In the context of linear regression and related problems, the works in \cite{Wasserman2009, Pilanci} propose random projections to provide privacy, by showing that the mutual information between the raw and projected data samples grows sublinearly with dimensions. However, this does not necessarily translate to a formal differential privacy guarantee on the data samples. Random projection as a tool to provide differential privacy has also been considered in \cite{Mironov2012} and \cite{Shiva2016}. The main difference of these works with ours is that they project each data vector individually to a lower-dimensional subspace, whereas we consider mixing samples across the database, such that the effective number of ``mixed" samples is fewer than original.

In terms of motivation and techniques, the works in \cite{Sheffet2012, Sheffet2017} are the most closely related to ours. These works consider JLT in the context of linear regression, and prove that it guarantees differential privacy for well-conditioned data matrices. However, no explicit guarantee on the achievable empirical risk is given. In contrast, we directly analyze the privacy-utility trade-off of additive noise and random projections, where utility is measured by the objective value achieved by the trained model under the privacy scheme, normalized by the true minimum of the objective. We also use the stronger MI-DP privacy\footnote{In \cite{Cuff_MI}, it is shown that for discrete alphabets, the two notions are equivalent; however MI-DP is strictly stronger for continuous alphabets.}, instead of the traditional $(\epsilon, \delta)$-differential privacy. We emphasize that the main novelty of our work lies in the analysis of the algorithms and the resulting theoretical guarantees, and not in the algorithms themselves.

\noindent\textbf{Paper organization.} In Section II we give a brief overview on different privacy metrics followed by the precise problem formulation. Section III includes the main theoretical results of this work. The proof outlines are given in Section IV. Section V gives the numerical results.

\section{Formulation and Background}

In this paper we consider the quadratic optimization 
\begin{align}
\min_{\theta} g( \theta ) := \min_{\theta} \| X \theta - y \|_2^2, \label{opt:main} 
\end{align} where $X \in \mathbb{R}^{n \times d}$ is the data matrix that each row corresponds to one user and $ y \in \mathbb{R}^{n}$ are the response variables. We denote a solution of this optimization problem as $\theta^{\star}$. We use $X_{i,j}$ to denote the $j$-th feature of the $i$-th user data point for $i \in \{1. \cdots, n\}, j \in \{1, \cdots, d\}$. We assume the number of data points is greater than the number of features and $X$ is full column rank.  We assume that $| X_{i,j} | \leq 1$. 
Throughout this paper, we use bold letters for random variables to distinguish them from deterministic quantities.

Consider a database $D^N := (D_1, \cdots, D_N)$ that returns a query according to a \emph{randomized} mechanism $q(.)$. Let $D^{-i}$ denote the set of database entries excluding $D_i$.

\begin{definition}[$\epsilon$-mutual-information]
A randomized mechanism $q(.)$ satisfies $\epsilon$-mutual-information (MI-DP) if 
\begin{align}
\sup_{i, P({\bold{D}^N})} I( \bold{D}_i; q(\bold{D}^N) | \bold{D}^{-i}  ) \leq \epsilon \quad \text{bits},  \label{def:MIDP}
\end{align} where the supremum is taken over all distribution on $\bold{D}^N$. 
\end{definition}

We aim to preserve the privacy of each entry of $X$, therefore, in the context of our work, \mbox{$D:= ( X_{1,1}, \cdots, X_{1,d}, X_{2,1}, \cdots, X_{2,d}, \cdots, X_{n,d} )$}.

The notion of $\epsilon$-MI-DP is closely related to \emph{$(\epsilon, \delta)$ differential privacy}\cite{Dwork2010}. We first define the notion of neighbor in databases:

\begin{definition}[Neighbor]
Two databases $D^N$ and $\bar{D}^N$ are called {neighbor} if they differ only in one entry. 
\end{definition}

In the context of our problem, two data matrices are neighbors if they only differ in one entry. Now we are ready to define \emph{$(\epsilon, \delta)$ differential privacy}.

\begin{definition}[$(\epsilon, \delta)$ differential privacy]
A randomized mechanism $q(.)$ satisfies $(\epsilon, \delta)$ differential privacy (DP) if for all neighboring databases $D^N$ and $\bar{D}^N$ and all $S \subseteq \text{Range}( q(.) )$,
\begin{align}
\Pr( q({D}^N) \in S ) \leq e^{\epsilon} \Pr( q( \bar{D}^N ) \in S ) + \delta. \label{def:dp:eps}
\end{align} We say $q(.)$ satisfies $(\delta)$-DP if it satisfies $(0 , \delta)$-differential privacy.
\end{definition} 

Note that neither of MI-DP nor DP impose distributional assumptions on the database and the probabilities arise completely from the randomization of the mechanism.

\begin{proposition}[Theorem 1 in \cite{Cuff_MI}]
$\epsilon$-MI-DP is stronger than $(\epsilon, \delta)$-DP in the sense that for all $\epsilon > 0$ if a mechanism is $\epsilon$-MI-DP, there exists $\epsilon', \delta' $ such that the mechanism satisfies $( \epsilon', \delta' )$-DP. We denote this relation with $\epsilon \text{-MI-DP} \succeq (\epsilon, \delta) \text{-DP}$. Furthermore, we have the following relation:
\begin{align}
\epsilon \text{-MI-DP} \stackrel{(a)}\succeq (\delta) \text{-DP} \stackrel{(b)} \equiv (\epsilon, \delta) \text{-DP},
\end{align}
where $\succeq$ is interpreted as being stronger and $(b)$ means $(\delta) \text{-DP} \succeq (\epsilon, \delta) \text{-DP} $ and $ (\epsilon, \delta) \text{-DP} \succeq (\delta)  \text{-DP}$.
\end{proposition}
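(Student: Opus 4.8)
The plan is to reduce the privacy comparison to a bound on the total variation distance between the output distributions on two neighboring databases, and then to extract that bound from the mutual‑information constraint via Pinsker's inequality.

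I would first dispose of the elementary relation $(b)$. If $q(\cdot)$ is $(\delta)$-DP, i.e.\ $(0,\delta)$-DP, then since $e^{\epsilon}\ge 1$ we have $\Pr(q(D^N)\in S)\le\Pr(q(\bar D^N)\in S)+\delta\le e^{\epsilon}\Pr(q(\bar D^N)\in S)+\delta$, so $(\delta)\text{-DP}\succeq(\epsilon,\delta)\text{-DP}$. Conversely, if $q(\cdot)$ is $(\epsilon,\delta)$-DP, then $\Pr(q(\bar D^N)\in S)\le 1$ gives $e^{\epsilon}\Pr(q(\bar D^N)\in S)\le\Pr(q(\bar D^N)\in S)+(e^{\epsilon}-1)$, so $q(\cdot)$ is $(\delta')$-DP with $\delta'=\delta+e^{\epsilon}-1$, i.e.\ $(\epsilon,\delta)\text{-DP}\succeq(\delta)\text{-DP}$. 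This proves $(b)$, so it remains to establish $(a)$; the first assertion then follows by chaining $(a)$ with $(b)$.

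For $(a)$, fix an index $i$ and two neighboring databases $D^N,\bar D^N$ agreeing everywhere except at the $i$-th entry, where they equal $d_i$ and $\bar d_i$. I would instantiate the supremum in \eqref{def:MIDP} at the input distribution that places a point mass on the common value of $\bold D^{-i}$ and makes $\bold D_i$ uniform on $\{d_i,\bar d_i\}$; this is admissible since the supremum ranges over all $P(\bold D^N)$, so $I(\bold D_i;q(\bold D^N))\le\epsilon$. Writing $P_0,P_1$ for the laws of $q(D^N),q(\bar D^N)$ and $M=\tfrac12(P_0+P_1)$ for their mixture, the identity $I(X;Y)=\mathbb{E}_X[D(P_{Y\mid X}\|P_Y)]$ gives $I(\bold D_i;q(\bold D^N))=\tfrac12 D(P_0\|M)+\tfrac12 D(P_1\|M)$. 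Now Pinsker's inequality lower‑bounds each KL term by $c\,\|P_j-M\|_{\mathrm{TV}}^2$ for an absolute constant $c$, and since $P_0-M=\tfrac12(P_0-P_1)=-(P_1-M)$ we get $\|P_0-M\|_{\mathrm{TV}}=\|P_1-M\|_{\mathrm{TV}}=\tfrac12\|P_0-P_1\|_{\mathrm{TV}}$. Hence $\epsilon\ge\tfrac14 c\,\|P_0-P_1\|_{\mathrm{TV}}^2$, i.e.\ $\|P_0-P_1\|_{\mathrm{TV}}\le\delta:=2\sqrt{\epsilon/c}$. Finally, for every measurable $S$, $\Pr(q(D^N)\in S)-\Pr(q(\bar D^N)\in S)\le\|P_0-P_1\|_{\mathrm{TV}}\le\delta$; since the argument is symmetric in the two neighbors and uniform over all neighboring pairs and all $i$, $q(\cdot)$ is $(0,\delta)$-DP, which is $(a)$.

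The main obstacle is making the middle step watertight on continuous alphabets: one has to check that the conditional mutual information, the Radon--Nikodym derivatives implicit in $D(P_j\|M)$, and Pinsker's inequality all remain valid with no density assumptions, which they do precisely because $P_0,P_1\ll M$ by construction; and one should track $c$ (Pinsker gives $c=1/(2\ln 2)$ for mutual information measured in bits) if an explicit $\delta(\epsilon)$ is wanted. A marginally tighter route replaces the two separate Pinsker bounds by a direct inequality between the Jensen--Shannon divergence $\tfrac12 D(P_0\|M)+\tfrac12 D(P_1\|M)$ and $\|P_0-P_1\|_{\mathrm{TV}}$, but the two‑step version above is cleaner to cite.
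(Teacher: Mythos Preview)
The paper does not actually supply a proof of this proposition: it is quoted verbatim as ``Theorem 1 in \cite{Cuff_MI}'' and left unproved, with the companion Proposition~\ref{prop:MItoDP} (also cited from \cite{Cuff_MI}) giving the explicit constant. So there is no in-paper argument to compare against.

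Your argument is correct and is essentially the standard route used in \cite{Cuff_MI}. Part $(b)$ is handled exactly as one would expect. For part $(a)$, instantiating the supremum in \eqref{def:MIDP} at a point mass on $\bold D^{-i}$ and a uniform two-point law on $\bold D_i$ is the right move; you might say explicitly that the point-mass conditioning collapses $I(\bold D_i;q(\bold D^N)\mid\bold D^{-i})$ to the unconditional $I(\bold D_i;q(\bold D^N))$, which is then the Jensen--Shannon divergence between $P_0$ and $P_1$. The Pinsker step is fine, and the absolute continuity $P_0,P_1\ll M$ indeed makes the continuous-alphabet case go through without density assumptions. If you track the constant with the convention $\|P_0-P_1\|_{\mathrm{TV}}=\sup_S|P_0(S)-P_1(S)|$ and mutual information in bits, your bound becomes $\|P_0-P_1\|_{\mathrm{TV}}\le\sqrt{2\ln 2\,\epsilon}=\sqrt{\tfrac{2}{\log e}\,\epsilon}$, which is precisely Proposition~\ref{prop:MItoDP}; so your sketch in fact recovers the sharp cited constant rather than just an existence statement.
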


\begin{proposition}[See Lemma 2 in \cite{Cuff_MI}]
If a mechanism is $\epsilon\text{-MI-DP}$ then it also satisfies $(0, \sqrt{ \frac{2}{\log(e)}  \epsilon}) \text{-DP}$. \label{prop:MItoDP}
\end{proposition}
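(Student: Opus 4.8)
The plan is to derive the claim from Pinsker's inequality, using the fact that the supremum in the definition of $\epsilon$-MI-DP is over \emph{all} input distributions, which lets me pick a convenient worst-case two-point distribution. First I would recall the standard reformulation of $(0,\delta)$-DP: for a fixed pair of neighboring databases $D^N,\bar D^N$, the condition $\Pr(q(D^N)\in S)\le \Pr(q(\bar D^N)\in S)+\delta$ for every measurable $S$ (and symmetrically) is equivalent to $\|P_{q(D^N)}-P_{q(\bar D^N)}\|_{\mathrm{TV}}\le\delta$, since the total variation distance equals $\sup_S\big(\Pr(q(D^N)\in S)-\Pr(q(\bar D^N)\in S)\big)$. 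Hence it suffices to show $\|P_{q(D^N)}-P_{q(\bar D^N)}\|_{\mathrm{TV}}\le\sqrt{\tfrac{2}{\log e}\,\epsilon}$ for every neighboring pair.

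Next I would fix a neighboring pair that agrees on $\mathbf D^{-i}=d^{-i}$ and differs in coordinate $i$, with the two possible values $a$ and $b$. In the supremum defining MI-DP, choose the input distribution that is deterministic with $\mathbf D^{-i}=d^{-i}$ and uniform on $\mathbf D_i\in\{a,b\}$. Writing $P_a:=P_{q(\mathbf D^N)\mid \mathbf D_i=a,\,\mathbf D^{-i}=d^{-i}}$ and $P_b$ analogously (these are exactly the mechanism's output laws on the two neighboring databases), and letting $\bar P:=\tfrac12 P_a+\tfrac12 P_b$, the conditional mutual information collapses to $I(\mathbf D_i;q(\mathbf D^N)\mid \mathbf D^{-i}) = \tfrac12 D(P_a\|\bar P)+\tfrac12 D(P_b\|\bar P)$, i.e.\ the Jensen--Shannon divergence of $P_a$ and $P_b$ (here $D(\cdot\|\cdot)$ is relative entropy measured in bits).

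Now set $t:=\|P_a-P_b\|_{\mathrm{TV}}$ and observe $\|P_a-\bar P\|_{\mathrm{TV}}=\|P_b-\bar P\|_{\mathrm{TV}}=t/2$. Applying Pinsker's inequality to each of the two divergence terms — in bits it reads $D(P\|Q)\ge 2\log e\,\|P-Q\|_{\mathrm{TV}}^2$, obtained from the nat-based constant $\tfrac12$ via $D_{\mathrm{bits}}=\log e\cdot D_{\mathrm{nats}}$ — gives
\[
\epsilon \;\ge\; I(\mathbf D_i;q(\mathbf D^N)\mid \mathbf D^{-i})\;\ge\; 2\cdot\tfrac12\cdot 2\log e\cdot\big(\tfrac{t}{2}\big)^2 \;=\; \tfrac{\log e}{2}\,t^2 .
\]
Rearranging yields $t\le\sqrt{\tfrac{2}{\log e}\,\epsilon}$, and since the neighboring pair (equivalently, $a,b,d^{-i}$) was arbitrary, this bounds the total variation distance uniformly over neighbors, which by the first step gives $(0,\sqrt{\tfrac{2}{\log e}\,\epsilon})$-DP.

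The only delicate point is the bookkeeping of the logarithm base: the constant $2/\log e$ in the statement is exactly what results from (i) Pinsker's $\tfrac12$ in nats converted to bits, (ii) the two mixture terms in the Jensen--Shannon decomposition, and (iii) the $(t/2)^2$ factors; one must take care not to conflate $\log e$ with $\ln 2$ or drop a factor of two along the way. Everything else is routine — indeed this is essentially Lemma~2 of \cite{Cuff_MI}.
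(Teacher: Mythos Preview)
The paper does not supply its own proof of this proposition; it simply records the statement and cites Lemma~2 of \cite{Cuff_MI}. Your argument is correct and is precisely the standard Pinsker-based proof: reduce $(0,\delta)$-DP to a total variation bound, instantiate the supremum in the MI-DP definition with a two-point input distribution so that the conditional mutual information becomes the Jensen--Shannon divergence between the two output laws, and apply Pinsker to each half of the mixture. The constant bookkeeping you carry out (bits vs.\ nats, the factor $t/2$ in each term) is accurate and yields exactly $\sqrt{2\epsilon/\log e}$.
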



Let us denote a solution to the original problem \eqref{opt:main} with $\theta^{\star}$. Let us denote the the cost function of the transformed problem with $\hat{g}(\theta)$ with a minimum of $\hat{\theta} \in \argmin_{\theta} \hat{g}(\theta)$. We define the relative error of this transformed problem as the smallest $\eta \geq 1$ such that,
\begin{align}
g( \hat{\theta} ) \leq  \eta g( \theta^{\star} ). \label{relative error}
\end{align}

In this paper we consider the achievable relative error for linear regression given $\epsilon$-MI-DP requirement.

\noindent\textbf{Notation.} We denote the \emph{condition number} of $X$ with
\begin{align}
\kappa(X) := \| X \|_2 \| X^{\dagger} \|_2 = \frac{\sigma_{\max}(X) }{\sigma_{\min}(X)},
\end{align} where $X^{\dagger}$ is the Moore-Penrose pseudoinverse of $X$ and $\| X \|_2$ is the spectral norm of $X$.

We denote that ratio of $l_2$ norm of the projection of $y$ onto the column space of $X$ over the $l_2$ norm of the residual with:
\begin{align}
r(y) := \frac{ \| X  \theta^{\star} \|_2 }{ \| X \theta^{\star} - y  \|_2 }. 
\end{align} where $\| X  \theta^{\star} \|_2$ is the $l_2$ norm of the vector $X  \theta^{\star}$.

We define  $f_i(X) := \sqrt{ \sum\limits_{j=1}^{n} |X_{i,j}^2| - \max_{j} | X_{i,j}^2 | }$ for \mbox{$i \in \{1. \cdots, d\}$}, and \mbox{$f(X) := \min_{i} f_i(X)$}. In order to give guarantees on the privacy of the projection method the amount of additional noise is expressed in terms of $f(X)$.

\section{Privacy-Utility trade off}

In this section we analyze the utility-privacy trade-off for both an additive noise mechanism as well as a scheme with random projection. We compare their utility-guarantees for the same level of $\epsilon$-MI-DP privacy.


\subsection{Additive Gaussian Noise}

In order to satisfy privacy, we add Gaussian noise directly to the data,
\begin{align}
{X}_{AN}(X) := X + \sigma_{AN} \bold{N},
\end{align} where $\bold{N} \in \mathbb{R}^{n \times d}$ with i.i.d. entries drawn from $\mathcal{N}(0,1)$ and 
\begin{align}
\sigma^2_{AN}(\epsilon) := {\frac{1}{2^{2\epsilon} -1 }}, \label{noisy:var}
\end{align} is the variance of the noise.
\begin{align}
\theta_{AN} &:= \argmin_{\theta} \underbrace{\| {X}_{AN}\theta - y  \|_2^2 }_{g_{AN}(\theta)}  , \label{additive:opt}
\end{align}

\begin{theorem}[Privacy-Utility for Additive Noise]
Given a data set $X$ and {\color{black} the randomized mechanism ${X}_{AN}(X)$ with $\epsilon$-MI-DP constraint}, with probability at least \mbox{$1 - 2e^{ -\frac{ \sigma_{\max}( X )^2}{ 2 \sigma_{AN}^2(\epsilon) } \delta^2 }$} we have the following bound on the relative error {\color{black} of the transformed problem}:
\begin{align}
\eta_{AN} \leq \lp 1 + \frac{ \kappa(X) ( \Delta( X, \epsilon) + \delta ) }{1 - \kappa(X) ( \Delta( X, \epsilon ) + \delta  ) } ( \kappa(X) + r(y) )  \rp^2, \label{noisy:thm:eq}
\end{align} if $\kappa(X) \Delta(\epsilon, X) <1$, where $\Delta(X, \epsilon)= \frac{  \sigma_{AN}(\epsilon) }{ \sigma_{\max}( X ) } \big( \sqrt{n} + \sqrt{d} \big) $ and $\delta > 0$ is a free parameter\footnote{Note that support of $\delta$ is restricted to the set where $ \kappa(X) ( \Delta + \delta  ) \leq 1$  }. \label{noisy:thm}
\end{theorem}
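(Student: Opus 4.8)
The statement combines a privacy calibration with a high-probability bound on the relative error of the perturbed least-squares problem, and the plan is to treat them in that order. Conditioned on all database entries other than $X_{i,j}$, the only coordinate of $X_{AN}(X)$ that depends on $X_{i,j}$ is its $(i,j)$-th one, so the conditional mutual information in \eqref{def:MIDP} equals $I\big(X_{i,j};\,X_{i,j}+\sigma_{AN}(\epsilon)\,\mathcal{N}(0,1)\big)$; since $|X_{i,j}|\le 1$ the input has second moment at most $1$, so this is at most the Gaussian-channel value $\frac12\log_2\big(1+1/\sigma_{AN}^2(\epsilon)\big)$, and the choice \eqref{noisy:var} makes it exactly $\epsilon$. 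Thus $X_{AN}$ meets the $\epsilon$-MI-DP constraint, and the remaining work is the utility bound, which I would prove as a deterministic estimate on an event controlling $\|\sigma_{AN}\mathbf{N}\|_2$.

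\noindent\textbf{Step 1 (controlling the noise).} Write $E:=\sigma_{AN}(\epsilon)\mathbf{N}$ and $\hat{X}:=X+E$. By standard concentration for the extreme singular values of a Gaussian matrix, $\|\mathbf{N}\|_2\le\sqrt{n}+\sqrt{d}+t$ with probability at least $1-2e^{-t^2/2}$; taking $t=\sigma_{\max}(X)\,\delta/\sigma_{AN}(\epsilon)$ turns this into $\|E\|_2\le\sigma_{\max}(X)\big(\Delta(X,\epsilon)+\delta\big)$ on an event $\mathcal{E}$ of the stated probability. Put $\rho:=\Delta(X,\epsilon)+\delta$; on $\mathcal{E}$ we have $\|E\|_2\le\rho\,\sigma_{\max}(X)$ and, by hypothesis together with the restriction on $\delta$, $\kappa(X)\rho<1$. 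Weyl's inequality then gives $\sigma_{\min}(\hat{X})\ge\sigma_{\min}(X)-\|E\|_2\ge\sigma_{\min}(X)\big(1-\kappa(X)\rho\big)>0$, so $\hat{X}$ has full column rank, $\theta_{AN}=\hat{X}^{\dagger}y$ is well defined on $\mathcal{E}$, and $\|\hat{X}^{\dagger}\|_2\le 1/\big(\sigma_{\min}(X)(1-\kappa(X)\rho)\big)$.

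\noindent\textbf{Step 2 (perturbation estimate).} Since $X\theta^{\star}-y$ lies in the orthogonal complement of the column space of $X$ while $X(\theta_{AN}-\theta^{\star})$ lies in it, $g(\theta_{AN})=\|X(\theta_{AN}-\theta^{\star})\|_2^2+\|X\theta^{\star}-y\|_2^2$, so it suffices to show $\|X(\theta_{AN}-\theta^{\star})\|_2\le\frac{\kappa(X)\rho}{1-\kappa(X)\rho}\big(\kappa(X)+r(y)\big)\|X\theta^{\star}-y\|_2$; then $\eta_{AN}=1+\|X(\theta_{AN}-\theta^{\star})\|_2^2/\|X\theta^{\star}-y\|_2^2\le\lp 1+\frac{\kappa(X)\rho}{1-\kappa(X)\rho}\big(\kappa(X)+r(y)\big)\rp^{2}$, which is \eqref{noisy:thm:eq}. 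To obtain the displayed estimate I would subtract the two normal equations, getting $\hat{X}^{\top}\hat{X}(\theta_{AN}-\theta^{\star})=E^{\top}(y-X\theta^{\star})-\hat{X}^{\top}E\theta^{\star}$ (using $X^{\top}(y-X\theta^{\star})=0$), and then bound the right-hand side with $\|E\|_2\le\rho\,\sigma_{\max}(X)$, the bound on $\|\hat{X}^{\dagger}\|_2$ from Step 1, the identity $\|X\theta^{\star}\|_2=r(y)\,\|X\theta^{\star}-y\|_2$, and $\|\theta^{\star}\|_2\le\|X\theta^{\star}\|_2/\sigma_{\min}(X)$: the $r(y)$ term comes from the $\hat{X}^{\top}E\theta^{\star}$ summand, the extra factor $\kappa(X)$ from converting between $\sigma_{\max}(X)$ and $\sigma_{\min}(X)$, and the $(1-\kappa(X)\rho)^{-1}$ factor from $\|\hat{X}^{\dagger}\|_2$ and from absorbing the cross term $E(\theta_{AN}-\theta^{\star})$. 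Equivalently, the estimate can be read off from the perturbation analysis of approximate least-squares solutions in \cite{Pilanci}, which I expect is the intended route.

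\noindent\textbf{Main difficulty.} The privacy calibration and the operator-norm concentration are routine; the crux is organizing the bookkeeping in Step 2 so that everything collapses \emph{exactly} to the factor $\frac{\kappa(X)(\Delta+\delta)}{1-\kappa(X)(\Delta+\delta)}\big(\kappa(X)+r(y)\big)$ rather than to a looser expression, such as one with $(1-\kappa(X)\rho)^{-2}$ or with $2r(y)$ in place of $\kappa(X)+r(y)$. That is where a sharp least-squares perturbation lemma, rather than naive triangle inequalities, is needed, and where the free parameter $\delta$ and the constant in $1-2e^{-\sigma_{\max}(X)^2\delta^2/(2\sigma_{AN}^2(\epsilon))}$ must be tracked consistently.
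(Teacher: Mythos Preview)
Your proposal is correct and follows essentially the same two-step architecture as the paper: (i) calibrate $\sigma_{AN}$ via the scalar Gaussian-channel/maximum-entropy bound to ensure $\epsilon$-MI-DP, and (ii) combine a deterministic least-squares perturbation bound with the concentration $\Pr(\sigma_{\max}(\mathbf{N})\le\sqrt{n}+\sqrt{d}+t)\ge 1-2e^{-t^2/2}$, taking $t=\sigma_{\max}(X)\delta/\sigma_{AN}(\epsilon)$. The only correction is in the reference for the perturbation step: the paper invokes Wedin's Theorem~5.1 \cite{Wedin}, which directly gives $\frac{\|X\theta_{AN}-y\|_2}{\|X\theta^{\star}-y\|_2}\le 1+\frac{\kappa(X)\Delta}{1-\kappa(X)\Delta}\big(\kappa(X)+r(y)\big)$ and then squares; \cite{Pilanci} is used elsewhere in the paper for the random-projection analysis, not here. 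Your own normal-equations sketch is a reasonable alternative derivation of the same inequality, and your ``Main difficulty'' paragraph correctly identifies that the exact form of the constant is what forces one to appeal to a sharp perturbation lemma rather than naive bounds.
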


Note that if $\sigma^2_{\max}(X)$ scales linearly with $n$ then $\Delta$ converges to a constant term. Based on Proposition \ref{prop:MItoDP}, additive noise also satisfies $(\delta)$-DP.


\subsection{Gaussian Random Projections }

We encode the data matrix using JLT to a lower dimensional space $n'$ and we add Gaussian noise, when necessary, to guarantee $\epsilon$-MI-DP. We denote the encoded data by ${X}_{RP} \in \mathbb{R}^{ n' \times d  }$  and ${y}_{RP} \in \mathbb{R}^{n'}$:
\begin{align}
{X}_{RP}(X) := \bold{S}X + \sigma_{RP} \bold{N}, \quad {y}_{RP} := \bold{S} y, 
\end{align} where $\bold{S} \in \mathbb{R}^{n' \times n}$ represents the random projection with i.i.d. $\mathcal{N}(0,1)$ entries and $N \in \mathbb{R}^{n' \times d}$ is the noise added to ensure the privacy with i.i.d. entries drawn from $\mathcal{N}(0, 1)$, and 
\begin{align}
\sigma^2_{RP}(X, \epsilon) :=  \big( \frac{n'}{2^{2\epsilon} - 1}  - f^2(X)\big)_{+}, \label{jlt:var}
\end{align}
is the variance of the additive noise\footnote{Note that our algorithm does not reveal this quantity explicitly avoiding an extra privacy leakage.}.

We solve the following problem to estimate the model:
\begin{align}
\theta_{RP} := \argmin_{\theta} \underbrace{\| {X}_{RP}\theta - {y}_{RP}  \|_2^2 }_{g_{RP}(\theta)}  , \label{jlt:opt}
\end{align}

\begin{theorem}[Privacy-Utility for Random Projection]
Given a dataset $X$ and {\color{black}the randomized mechanism ${X}_{RP}(X)$ with $\epsilon$-MI-DP constraint} and a projection dimension of $n' < n$, with probability at least $1 - c_1e^{-c_2 n' \delta^2}$, we have the following bound on the relative error of the transformed problem: \label{jlt:thm}
\begin{align}
\eta_{RP} \leq (1 + \delta)^2 ( 1 +  l_1( X, \epsilon  )   ) ( 1 +  l_2( X, \epsilon  )  )^2, \label{jlt:thm:eq}
\end{align} where {$l_1(X, \epsilon) := \sigma_{RP}^2(X, \epsilon) \lp \max_{i} \frac{\sigma_i(X)}{\sigma_i^2(X) + \sigma_{RP}^2} \rp^2 r^2(y)$},  \mbox{$l_2( X, \epsilon  ) := \frac{\sigma_{RP}^2( X, \epsilon ) }{ \sigma_{\min}^2(X) + \sigma_{RP}^2( X, \epsilon ) }r(y) $}, $\delta \geq \sqrt{c_0 \frac{d}{n'}} $ is a free parameter, and $c_0$, $c_1$ and $c_2$ are constants.
\end{theorem}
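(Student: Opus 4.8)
The plan is to show that the random projection composed with the privacy noise implements, up to a controlled multiplicative fluctuation, a \emph{ridge}-regularized least squares problem on the original data with regularization parameter $\lambda := \sigma_{RP}^2(X,\epsilon)$, and then to bound the excess risk of that ridge problem against $g(\theta^\star)$ by an SVD computation. \textbf{Step 1 (reformulation).} I would first rewrite
\begin{align}
g_{RP}(\theta) \;=\; \big\| \mathbf{S}(X\theta - y) + \sigma_{RP}\mathbf{N}\theta \big\|_2^2 \;=\; \big\| \, [\,\mathbf{S}\;|\;\mathbf{N}\,] \, u(\theta) \, \big\|_2^2, \qquad u(\theta) := \begin{bmatrix} X\theta - y \\ \sigma_{RP}\,\theta \end{bmatrix},
\end{align}
where $[\,\mathbf{S}\;|\;\mathbf{N}\,]\in\mathbb{R}^{n'\times(n+d)}$ has i.i.d.\ $\mathcal{N}(0,1)$ entries. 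This rewriting folds the scalar $\sigma_{RP}$ onto the deterministic-in-$\theta$ vector rather than onto the random matrix, so that $[\,\mathbf{S}\;|\;\mathbf{N}\,]$ is an isotropic Gaussian sketch. As $\theta$ ranges over $\mathbb{R}^d$ the vectors $u(\theta)$ lie in an affine flat contained in a fixed subspace of dimension at most $d+1$ (spanned by the columns $(Xe_j,\sigma_{RP}e_j)$, $j=1,\dots,d$, together with $(-y,0)$), and crucially $\|u(\theta)\|_2^2 = \|X\theta-y\|_2^2 + \sigma_{RP}^2\|\theta\|_2^2 =: h(\theta)$ is precisely the ridge objective with parameter $\lambda=\sigma_{RP}^2$.

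\textbf{Step 2 (Gaussian subspace embedding).} I would then invoke a subspace-embedding bound from randomized linear algebra (the results cited as \cite{Pilanci}): for a fixed subspace of dimension $d+1$, the rescaled sketch $\tfrac{1}{\sqrt{n'}}[\,\mathbf{S}\;|\;\mathbf{N}\,]$ preserves the $\ell_2$ norm of every vector in that subspace up to a factor $1\pm\delta$, with probability at least $1-c_1 e^{-c_2 n'\delta^2}$ whenever $\delta\ge\sqrt{c_0 d/n'}$ (the additive constant in $d+1$ absorbed into $c_0$). Applied to the subspace of Step 1 this yields, uniformly in $\theta$, $(1-\delta)h(\theta)\le\tfrac1{n'}g_{RP}(\theta)\le(1+\delta)h(\theta)$. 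Since $\theta_{RP}$ minimizes $g_{RP}$ and $\theta_\lambda:=\argmin_\theta h(\theta)$ is the exact ridge estimate, the chain $(1-\delta)h(\theta_{RP})\le\tfrac1{n'}g_{RP}(\theta_{RP})\le\tfrac1{n'}g_{RP}(\theta_\lambda)\le(1+\delta)h(\theta_\lambda)$ gives $h(\theta_{RP})\le\tfrac{1+\delta}{1-\delta}h(\theta_\lambda)$; absorbing $\tfrac{1+\delta}{1-\delta}$ into a $(1+\delta)^2$ by the admissible reparametrization $\delta\mapsto\sqrt{(1+\delta)/(1-\delta)}-1$ of the free parameter, and discarding the nonnegative term $\sigma_{RP}^2\|\theta_{RP}\|_2^2$, we obtain $g(\theta_{RP})\le h(\theta_{RP})\le(1+\delta)^2\,h(\theta_\lambda)$.

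\textbf{Step 3 (ridge bias via SVD, then assemble).} Finally I would bound $h(\theta_\lambda)$ against $g(\theta^\star)$ using $X=U\Sigma V^\top$ and $b:=U^\top y$ (so $\|b\|_2=\|X\theta^\star\|_2$ and $\|y-Ub\|_2=\|X\theta^\star-y\|_2$). A short computation gives $\theta^\star-\theta_\lambda=V\,\mathrm{diag}\!\big(\tfrac{\lambda}{\sigma_i(\sigma_i^2+\lambda)}\big)b$, hence $X(\theta^\star-\theta_\lambda)=U\,\mathrm{diag}\!\big(\tfrac{\lambda}{\sigma_i^2+\lambda}\big)b$ and $\|X(\theta^\star-\theta_\lambda)\|_2\le\tfrac{\lambda}{\sigma_{\min}^2+\lambda}\|X\theta^\star\|_2=l_2(X,\epsilon)\,\|X\theta^\star-y\|_2$, so by the triangle inequality $\|X\theta_\lambda-y\|_2^2\le(1+l_2)^2 g(\theta^\star)$. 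Likewise $\sigma_{RP}^2\|\theta_\lambda\|_2^2=\sigma_{RP}^2\big\|\mathrm{diag}\!\big(\tfrac{\sigma_i}{\sigma_i^2+\lambda}\big)b\big\|_2^2\le\sigma_{RP}^2\big(\max_i\tfrac{\sigma_i(X)}{\sigma_i^2(X)+\sigma_{RP}^2}\big)^2\|X\theta^\star\|_2^2=l_1(X,\epsilon)\,g(\theta^\star)$. Adding, $h(\theta_\lambda)=\|X\theta_\lambda-y\|_2^2+\sigma_{RP}^2\|\theta_\lambda\|_2^2\le\big[(1+l_2)^2+l_1\big]g(\theta^\star)\le(1+l_1)(1+l_2)^2 g(\theta^\star)$; combined with Step 2 this gives $\eta_{RP}=g(\theta_{RP})/g(\theta^\star)\le(1+\delta)^2(1+l_1)(1+l_2)^2$. (When the privacy budget is large enough that $\sigma_{RP}=0$, then $h=g$, $l_1=l_2=0$, and the bound reduces to the classical guarantee for Gaussian sketched least squares.)

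\textbf{Expected main obstacle.} The crux is Step 2: one needs the norm preservation to hold \emph{uniformly} over the whole $(d+1)$-dimensional subspace rather than pointwise, with the exponentially small failure probability $c_1 e^{-c_2 n'\delta^2}$ and precisely the threshold $\delta\gtrsim\sqrt{d/n'}$ quoted in the statement. Getting these constants to line up, and keeping the Gaussian sketch isotropic so that a single embedding lemma simultaneously controls the projection error on $X\theta-y$ and the privacy-noise term $\sigma_{RP}\theta$ (which is exactly what the reformulation of Step 1 buys, avoiding a separate and lossier bound on the cross term), is the delicate part; Steps 1 and 3 are routine linear algebra.
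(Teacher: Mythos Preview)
Your proposal is correct and follows essentially the same route as the paper: the identical reformulation $g_{RP}(\theta)=\|[\mathbf{S}\,|\,\mathbf{N}]\,u(\theta)\|_2^2$ with $\|u(\theta)\|_2^2$ equal to the ridge objective, the same Gaussian subspace-embedding result from \cite{Pilanci} for the $(1+\delta)^2$ factor, and the same SVD computations for the ridge-bias terms yielding $l_1$ and $l_2$. The only cosmetic difference is the bookkeeping: the paper factors the relative error multiplicatively as $\eta_1\eta_2\eta_3\eta_4$ (with $\eta_4\le 1$), whereas you bound $h(\theta_\lambda)\le[(1+l_2)^2+l_1]\,g(\theta^\star)$ additively and then relax to the product $(1+l_1)(1+l_2)^2$; both arrive at the same final expression. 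Note that the paper's full proof of this theorem also contains the privacy half, namely the SIMO-channel argument (its Lemma~\ref{jlt:dp:thm}) showing that the stated $\sigma_{RP}^2(X,\epsilon)$ indeed enforces $\epsilon$-MI-DP, which your proposal takes as given.
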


\begin{corollary}
The random projection methods also satisfies $(\delta)$-DP for $\delta := \sqrt{ \frac{2}{\log(e)} \epsilon} $.
\end{corollary}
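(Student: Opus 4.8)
The plan is to obtain the corollary directly from Proposition~\ref{prop:MItoDP}, once the $\epsilon$-MI-DP property of the random projection mechanism ${X}_{RP}(X)$ has been recorded. By construction, the variance $\sigma_{RP}^2(X,\epsilon)$ in \eqref{jlt:var} is chosen precisely so that ${X}_{RP}(X)=SX+\sigma_{RP}N$ (with $S$ the random projection and $N$ the added Gaussian noise) satisfies the $\epsilon$-MI-DP constraint appearing in the hypothesis of Theorem~\ref{jlt:thm}; I would spell out why. Fix the index of a single data entry $X_{a,b}$ and condition on all remaining entries $D^{-i}$. Only the $b$-th column of ${X}_{RP}$ depends on $X_{a,b}$, and there it enters solely through the term $X_{a,b}\,S_{:,a}$, where $S_{:,a}$ is the $a$-th column of $S$ (i.i.d.\ $\mathcal{N}(0,1)$), while the remaining, known part $v$ of the $b$-th column contributes the i.i.d.\ Gaussian vector $Sv$ of per-coordinate variance $\|v\|^2 \ge f^2(X)$, independent of $S_{:,a}$. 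Hence the conditional channel $X_{a,b}\mapsto {X}_{RP}$ is a non-coherent SIMO channel: scalar input with $|X_{a,b}|\le 1$, i.i.d.\ Gaussian gain vector $S_{:,a}$ of length $n'$, and per-coordinate interference-plus-noise variance at least $f^2(X)+\sigma_{RP}^2$. Bounding its mutual information by $\tfrac{1}{2}\log\!\big(1+\tfrac{n'}{f^2(X)+\sigma_{RP}^2}\big)$ — using $I(X_{a,b};{X}_{RP})\le \mathbb{E}_{S_{:,a}} I(X_{a,b};{X}_{RP}\mid S_{:,a})$, the entropy-maximizing property of the Gaussian under the amplitude constraint, and Jensen's inequality on $\mathbb{E}\|S_{:,a}\|^2=n'$ — and substituting \eqref{jlt:var} gives $I(D_i;{X}_{RP}(D^N)\mid D^{-i})\le\epsilon$ in both branches of the $(\cdot)_+$; taking the supremum over $i$ and over input distributions yields $\epsilon$-MI-DP.

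With $\epsilon$-MI-DP in hand, I would simply invoke Proposition~\ref{prop:MItoDP}: any $\epsilon$-MI-DP mechanism is $(0,\sqrt{2\epsilon/\log e})$-DP. Since $(\delta)$-DP is by definition $(0,\delta)$-DP, this is exactly the assertion that ${X}_{RP}$ satisfies $(\delta)$-DP with $\delta=\sqrt{\tfrac{2}{\log e}\epsilon}$, which completes the proof.

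The only nontrivial ingredient is the first step: confirming that the ``free'' randomness injected by the random projection (the $f^2(X)$ term) together with the explicitly added Gaussian noise of variance $\sigma_{RP}^2$ keeps the single-entry conditional mutual information below $\epsilon$ uniformly over all neighboring databases — this is where the SIMO-channel viewpoint referenced in the introduction does the work, and it is essentially what the statement of Theorem~\ref{jlt:thm} already presupposes. The remaining passage from MI-DP to $(\delta)$-DP is a black-box application of the cited proposition, and the final relabeling is purely definitional.
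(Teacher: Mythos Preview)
Your proposal is correct and mirrors the paper's approach exactly: the $\epsilon$-MI-DP property of ${X}_{RP}$ is established via the SIMO-channel argument (this is precisely Lemma~\ref{jlt:dp:thm} in the paper, proved with the same coherent-bound-plus-Jensen steps you outline), and the corollary then follows by a direct application of Proposition~\ref{prop:MItoDP}. The paper does not give a separate proof for the corollary, treating it as immediate from these two ingredients, which is just what you do.
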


\begin{corollary}
Note that the amount of noise added to the projected data is $\sigma_{RP}^2( \epsilon, X ) =  \big( \frac{n'}{2^{2\epsilon} - 1}  - f^2(X)\big)_{+}$. If $f^2$ scales linearly with $n$ and $n' = o(n)$, asymptotically the noise variance goes to zero, i.e., random projection itself guarantees the privacy. Furthermore, for a given $\delta$, $\eta_{RP} \leq (1 + \delta )^2$ asymptotically as two other terms in \eqref{jlt:thm:eq} vanish. 
\end{corollary}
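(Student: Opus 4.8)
The plan is to track the asymptotics of the three dataset-dependent quantities $\sigma_{RP}^2(X,\epsilon)$, $l_1(X,\epsilon)$ and $l_2(X,\epsilon)$ appearing in Theorem~\ref{jlt:thm}, and then substitute into the bound \eqref{jlt:thm:eq}. First I would handle the noise variance. By \eqref{jlt:var}, $\sigma_{RP}^2(X,\epsilon)=\big(\tfrac{n'}{2^{2\epsilon}-1}-f^2(X)\big)_{+}$. The hypothesis that $f^2$ scales linearly with $n$ means there is a constant $c>0$ with $f^2(X)\ge c\,n$ for all $n$ large, while $n'=o(n)$ means $n'/n\to 0$; hence $\tfrac{n'}{(2^{2\epsilon}-1)\,f^2(X)}\le\tfrac{n'/n}{(2^{2\epsilon}-1)\,c}\to 0$, so $\tfrac{n'}{2^{2\epsilon}-1}<f^2(X)$ for all sufficiently large $n$. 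Therefore the positive part is eventually $0$: $\sigma_{RP}^2(X,\epsilon)=0$ for large $n$, and in particular $\sigma_{RP}^2(X,\epsilon)\to 0$. Since $\sigma_{RP}\mathbf{N}$ is the only perturbation added on top of the projection $\mathbf{S}X$, this is precisely the statement that the Johnson--Lindenstrauss transform alone already meets the $\epsilon$-MI-DP constraint.

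Next I would plug $\sigma_{RP}^2=0$ into $l_1$ and $l_2$. Because $X$ is full column rank, $\sigma_{\min}(X)>0$, so $l_2(X,\epsilon)=\tfrac{\sigma_{RP}^2}{\sigma_{\min}^2(X)+\sigma_{RP}^2}\,r(y)=0$ once $\sigma_{RP}^2=0$; likewise $\max_i\tfrac{\sigma_i(X)}{\sigma_i^2(X)+\sigma_{RP}^2}$ tends to the finite value $1/\sigma_{\min}(X)$, so $l_1(X,\epsilon)=\sigma_{RP}^2\big(\max_i\tfrac{\sigma_i(X)}{\sigma_i^2(X)+\sigma_{RP}^2}\big)^2 r^2(y)=0$ as well, where we only use that the regression is non-degenerate ($r(y)<\infty$). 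Substituting $l_1=l_2=0$ into \eqref{jlt:thm:eq} yields $\eta_{RP}\le(1+\delta)^2(1+0)(1+0)^2=(1+\delta)^2$, for any admissible free parameter $\delta\ge\sqrt{c_0 d/n'}$ and with the same probability $1-c_1e^{-c_2 n'\delta^2}$ inherited from Theorem~\ref{jlt:thm}.

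The only delicate point --- and the one I would be most careful about --- is the interaction between the decay of $\sigma_{RP}^2$ and the other $X$- and $y$-dependent factors, since the ambient instance $(X,y)$ itself varies with $n$. In the exact regime considered here this is immaterial, because $\sigma_{RP}^2$ is eventually identically $0$ and the products $l_1,l_2$ are then exactly $0$ for any fixed finite $r(y)$ and any positive $\sigma_{\min}(X)$. If instead one only had $\sigma_{RP}^2\to 0$ at some rate (for example when $\tfrac{n'}{2^{2\epsilon}-1}$ and $f^2(X)$ are of the same order up to lower-order terms), one would additionally need $\sigma_{RP}^2\,r(y)\to 0$ and $\sigma_{RP}^2\,r^2(y)/\sigma_{\min}^2(X)\to 0$; giving a clean sufficient condition on the growth of $r(y)$ and the conditioning of $X$ relative to $\sigma_{RP}^2$ is where the argument would require the most care.
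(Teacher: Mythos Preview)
Your argument is correct and is exactly the reasoning the paper intends: the corollary is stated without a separate proof, and your derivation---showing that $\tfrac{n'}{2^{2\epsilon}-1}<f^2(X)$ eventually so that $\sigma_{RP}^2$ is identically zero, and then reading off $l_1=l_2=0$ in \eqref{jlt:thm:eq}---is precisely the computation the authors leave implicit. Your final paragraph on the interaction between $\sigma_{RP}^2$ and the growth of $r(y)$ and $1/\sigma_{\min}(X)$ is a nice observation, but as you note it is moot here since $\sigma_{RP}^2$ vanishes exactly rather than merely tending to zero.
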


\begin{remark}
In the proof of Theorem \ref{jlt:thm} in order to derive an upper bound for \eqref{def:MIDP} we make a connection to the SIMO non-coherent channel. We used the coherent SIMO bound for upper bounding this quantity. One may ask if we can get a tighter bound by using  the tighter non-coherent bounds (see for example \cite{Lapidoth}). The known non coherent bound, 
\begin{align}
C \leq \frac{n'}{2} \log(1+ \frac{1}{\sigma_{RP}^2 + f^2(X)}). \label{SIMO:noncoherent} 
\end{align} does not give any improvement. This bound \eqref{SIMO:noncoherent} is known to be tight for the low-SNR regime \cite{Lapidoth}. Therefore when \mbox{$f^2 = \Omega( n )$} asymptotically both bounds yield the same result.

\end{remark}

\section{Proof Outlines}
\vspace*{-.2cm}
\subsection{Theorem 1}
\begin{proof}[Proof Outline]
The proof consists of two steps. First we derive the minimum amount of noise needed to ensure $\epsilon$-MI-DP for ${X}_{AN}$ with respect to any feature of users, which is stated in the following lemma:
\begin{lemma}[Privacy Guarantee for the additive noise]
If \mbox{$\sigma_{AN}^2 = \frac{1}{2^{2\epsilon} -1 }$} then ${X}_{AN}$ is $\epsilon$-MI-DP with respect to any entry of $X$. \label{noise:dp}
\end{lemma}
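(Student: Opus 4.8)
The plan is to reduce the conditional mutual information appearing in the MI-DP definition to a scalar additive-Gaussian-noise channel with a bounded input, and then invoke the standard maximum-entropy bound. Fix an index $(i,j)$ of the database $D$ and an arbitrary joint law $P(\mathbf{X})$ on the entries of $X$, and write $\mathbf{Z} := {X}_{AN}(\mathbf{X}) = \mathbf{X} + \sigma_{AN}\mathbf{N}$. First I would decompose $\mathbf{Z}$ into its $(i,j)$-th coordinate $\mathbf{Z}_{i,j}$ and the remaining coordinates $\mathbf{Z}^{-(i,j)}$. Conditioned on $\mathbf{X}^{-(i,j)}$, the block $\mathbf{Z}^{-(i,j)}$ is a deterministic function of $\mathbf{X}^{-(i,j)}$ plus the noise entries $\{\sigma_{AN}\mathbf{N}_{k,l}\}_{(k,l)\neq(i,j)}$, which are independent of the pair $(\mathbf{X}_{i,j},\mathbf{N}_{i,j})$; hence $\mathbf{Z}^{-(i,j)}$ is conditionally independent of $\mathbf{X}_{i,j}$ given $\mathbf{X}^{-(i,j)}$, and the chain rule yields
\begin{align}
I\big(\mathbf{X}_{i,j};\mathbf{Z}\,\big|\,\mathbf{X}^{-(i,j)}\big) = I\big(\mathbf{X}_{i,j};\mathbf{Z}_{i,j}\,\big|\,\mathbf{X}^{-(i,j)}\big).
\end{align}

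Next, for each realization $\mathbf{X}^{-(i,j)} = x$ I would view $A := \big(\mathbf{X}_{i,j}\mid\mathbf{X}^{-(i,j)}=x\big)$ as the input of the scalar channel $\mathbf{Z}_{i,j} = A + \sigma_{AN}\mathbf{N}_{i,j}$ with $\mathbf{N}_{i,j}\sim\mathcal{N}(0,1)$ independent of $A$. Since $|X_{i,j}|\le 1$ we have $|A|\le 1$ almost surely, so $\mathrm{Var}(A)\le\mathbb{E}[A^2]\le 1$. Using $I(A;A+\sigma_{AN}\mathbf{N}_{i,j}) = h(A+\sigma_{AN}\mathbf{N}_{i,j}) - h(\sigma_{AN}\mathbf{N}_{i,j})$ together with the fact that a Gaussian maximizes differential entropy for a fixed second moment,
\begin{align}
I\big(A;A+\sigma_{AN}\mathbf{N}_{i,j}\big) \le \frac{1}{2}\log_2\!\Big(1+\frac{\mathrm{Var}(A)}{\sigma_{AN}^2}\Big) \le \frac{1}{2}\log_2\!\Big(1+\frac{1}{\sigma_{AN}^2}\Big).
\end{align}
Averaging over $x\sim\mathbf{X}^{-(i,j)}$ keeps the same upper bound, and since this bound depends on neither the chosen index nor $P(\mathbf{X})$, taking the supremum in \eqref{def:MIDP} gives $\sup_{i,P(\mathbf{X})} I(\mathbf{X}_{i,j};\mathbf{Z}\mid\mathbf{X}^{-(i,j)}) \le \frac{1}{2}\log_2(1+1/\sigma_{AN}^2)$.

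Finally, substituting $\sigma_{AN}^2 = \frac{1}{2^{2\epsilon}-1}$ makes the right-hand side equal to $\frac{1}{2}\log_2(2^{2\epsilon}) = \epsilon$ bits, which is exactly the $\epsilon$-MI-DP condition. I expect the only delicate point to be the conditional-independence reduction in the first step: one must argue that conditioning on all the other entries makes $\mathbf{Z}^{-(i,j)}$ carry no information about $\mathbf{X}_{i,j}$ even though the input law is arbitrary and $\mathbf{X}_{i,j}$ may be correlated with $\mathbf{X}^{-(i,j)}$ — this rests entirely on the independence of the noise matrix $\mathbf{N}$ from $\mathbf{X}$ and on the product structure of the entries of $\mathbf{N}$. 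The remaining steps are a routine maximum-entropy computation combined with the amplitude-to-variance bound $\mathrm{Var}(A)\le 1$.
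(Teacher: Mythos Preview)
Your proposal is correct and follows essentially the same route as the paper: reduce $I(\mathbf{X}_{i,j};\mathbf{X}_{AN}\mid\mathbf{X}^{-(i,j)})$ to the scalar channel $I(\mathbf{X}_{i,j};\mathbf{X}_{i,j}+\sigma_{AN}\mathbf{N}_{i,j}\mid\mathbf{X}^{-(i,j)})$, then apply the Gaussian maximum-entropy bound together with $|X_{i,j}|\le 1$. The only difference is cosmetic---you spell out the conditional-independence argument behind the reduction, whereas the paper asserts it in one line---so there is nothing to add.
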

\begin{proof}
We show that \eqref{def:MIDP} is bounded by $\epsilon$ for this choice of $\sigma_{AN}^2$ and $q(X) := {X}_{AN}$. Due to the symmetry of the problem, we fix $\bold{D}_i$ to be the first feature of the first data point without loss of generality. Note that
\begin{align} 
 I( \bold{X}_{1,1}; {\bold{X}}_{AN} | \bold{X}^{-(1,1)} ) = I( \bold{X}_{1,1}; \bold{X}_{1,1} + \sigma_{AN} \bold{N}_{1,1} | \bold{X}^{-(1,1)} ).
\end{align} By expanding the mutual information:
\begin{align}
I( \bold{X}_{1,1}&; \bold{X}_{1,1} + \sigma_{AN} \bold{N}_{1,1} | \bold{X}^{-(1,1)} ) \nonumber \\
&= h(\bold{X}_{1,1} +  \sigma_{AN} \bold{N}_{1,1} | \bold{X}^{-(1,1)}  ) - h( \bold{X}_{1,1} +  \sigma_{AN} \bold{N}_{1,1}  | \bold{X}  ) \nonumber\\
&\stackrel{(a)}= h(\bold{X}_{1,1} +  \sigma_{AN} \bold{N}_{1,1} | \bold{X}^{-(1,1)}  ) - h( \sigma_{AN} \bold{N}_{1,1}   ),
 \label{noise:dp:proof}
\end{align} where $(a)$ holds because the noise is independent of the data. Now we need to take the maximization over all possible distribution on $\bold{X}$. Note that the absolute value of each entry is bounded by $1$ therefore we need to take the supremum over all distribution inside this ball. The absolute value constraint implies the second moment constraint for all distribution defined on it, therefore by using the maximum entropy bound the result follows:
\begin{align}
\sup_{ P({\bold{X}})} I( \bold{X}_{1,1}; {\bold{X}}_{AN} | \bold{X}^{-(1,1)}  ) \leq \frac{1}{2} \log(1 + \frac{1}{\sigma_{AN}^2} ) = \epsilon.   \label{noisy:dp:proof}
\end{align} 

\end{proof}
\vspace{-.5cm}
The second step bounds the relative error. We use perturbation theory in the least square setup (see Theorem 5.1 in \cite{Wedin}) and probabilistic bounds on the maximum singular value of an i.i.d. Gaussian to derive the result \cite{Rudelson}. The details of the proof are provided in \cite{Showkatbakhsh2018}.
\end{proof}

\vspace*{-.3cm}

\subsection{Theorem 2}
\begin{proof}[Proof Outline]
The proof consists of two steps. First we find the variance of noise needed to add to satisfy $\epsilon$-MI-DP that results to $\epsilon$-MI-DP model, $\theta_{RP}$. Following lemma characterizes the amount of noise sufficient to make the mechanism  $\epsilon$-MI-DP.

\vspace*{-.2cm}

\begin{lemma}
If $\sigma_{RP}^2 := ( \frac{n'}{2^{2\epsilon}-1} - f^2(X) )_{+}$ then ${X}_{RP}$ is $\epsilon$-MI-DP with respect to any entry of $X$. \label{jlt:dp:thm}
\end{lemma}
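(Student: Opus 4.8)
The plan is to follow the two-step template of Lemma~\ref{noise:dp}: reduce the conditional mutual information in \eqref{def:MIDP} to a single ``effective channel'' seen by the protected entry, and then bound that channel by a SIMO-capacity expression. By invariance of the i.i.d.\ Gaussian $\bold{S}$ under row permutations and of the i.i.d.\ Gaussian noise under column permutations, it suffices to treat one protected entry; fix it to be $\bold{X}_{1,1}$ and condition on $\bold{X}^{-(1,1)}$. The goal is then to show $I(\bold{X}_{1,1};\bold{X}_{RP}\mid\bold{X}^{-(1,1)})\le\epsilon$ for every law of $\bold{X}_{1,1}$ supported on $[-1,1]$, with $\sigma_{RP}^2=\big(\tfrac{n'}{2^{2\epsilon}-1}-f^2(X)\big)_+$.

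First I would isolate the dependence on the secret. Let $x_1$ be the first column of $X$, $v_1:=x_1-X_{1,1}e_1$ its known part, and $\bold{s}_1$ the first column of $\bold{S}$. Then the first column of $\bold{X}_{RP}$ is $\bold{Z}:=\bold{X}_{1,1}\bold{s}_1+\bold{S}v_1+\sigma_{RP}\bold{N}_1$, whereas, given $\bold{X}^{-(1,1)}$, every other column of $\bold{X}_{RP}$ is a function of $\bold{S}$ and the noise alone. The crux — and the step I expect to be hardest — is establishing $I(\bold{X}_{1,1};\bold{X}_{RP}\mid\bold{X}^{-(1,1)})=I(\bold{X}_{1,1};\bold{Z}\mid\bold{X}^{-(1,1)})$, i.e.\ that the remaining columns carry no extra information. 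In the additive-noise lemma the analogous reduction is immediate because there the other outputs are fresh independent noise; here all columns share the same $\bold{S}$, so observing $\bold{S}x_\ell+\sigma_{RP}\bold{N}_\ell$ for $\ell\ge 2$ does reveal information about $\bold{s}_1$ and about the interference term $\bold{S}v_1$ inside $\bold{Z}$, and one must argue (not merely assert) that after conditioning on $\bold{X}^{-(1,1)}$ this contributes nothing. This is where a careful argument is needed rather than the one-line reduction available in the additive-noise case.

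Granting the reduction, $\bold{Z}$ is precisely a (non-coherent, block-fading) SIMO channel: scalar input $\bold{X}_{1,1}$ with $\mathbb{E}[\bold{X}_{1,1}^2]\le 1$ (from $|X_{i,j}|\le 1$), unknown i.i.d.\ Gaussian gain $\bold{s}_1\sim\mathcal{N}(0,I_{n'})$, and, given $\bold{X}^{-(1,1)}$, additive Gaussian interference-plus-noise $\bold{S}v_1+\sigma_{RP}\bold{N}_1\sim\mathcal{N}\big(0,(\|v_1\|_2^2+\sigma_{RP}^2)I_{n'}\big)$ independent of both the input and the gain. I would bound its mutual information by its coherent counterpart: revealing $\bold{s}_1$ only increases the quantity, and for a fixed realization $\bold{s}_1=s$ the maximum-entropy/Gaussian-input argument of Lemma~\ref{noise:dp} gives $I(\bold{X}_{1,1};\bold{Z}\mid\bold{s}_1=s,\bold{X}^{-(1,1)})\le\tfrac12\log\!\big(1+\tfrac{\|s\|_2^2}{\|v_1\|_2^2+\sigma_{RP}^2}\big)$; averaging over $\bold{s}_1$ and applying Jensen's inequality (concavity of $\log$) with $\mathbb{E}\|\bold{s}_1\|_2^2=n'$ gives $I(\bold{X}_{1,1};\bold{Z}\mid\bold{X}^{-(1,1)})\le\tfrac12\log\!\big(1+\tfrac{n'}{\|v_1\|_2^2+\sigma_{RP}^2}\big)$. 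This Jensen step over the random channel gains is exactly the technique imported from the non-coherent SIMO setting, and the remark after Theorem~\ref{jlt:thm} explains why the sharper non-coherent bound yields no improvement.

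It remains to insert the noise level. Since $X_{1,1}$ is one of the entries of $x_1$, $\|v_1\|_2^2=\|x_1\|_2^2-X_{1,1}^2\ge\|x_1\|_2^2-\max_m X_{m,1}^2=f_1^2(X)\ge f^2(X)$; the same bound holds for a secret lying in any column, with $f_1$ replaced by the corresponding $f_b\ge f(X)$, which is precisely why defining $f$ as a minimum over columns lets a single $\sigma_{RP}$ cover every entry. Hence $\|v_1\|_2^2+\sigma_{RP}^2\ge f^2(X)+\big(\tfrac{n'}{2^{2\epsilon}-1}-f^2(X)\big)_+\ge\tfrac{n'}{2^{2\epsilon}-1}$, so $I(\bold{X}_{1,1};\bold{X}_{RP}\mid\bold{X}^{-(1,1)})\le\tfrac12\log\!\big(1+(2^{2\epsilon}-1)\big)=\tfrac12\log 2^{2\epsilon}=\epsilon$ bits, which is the claim. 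Apart from the column-decoupling step flagged above, the only remaining points needing care are making the expectation-over-$\bold{s}_1$ argument fully rigorous and checking that the data-dependent (and undisclosed) choice of $\sigma_{RP}$, which may itself depend on the protected entry through $f_b$, introduces no additional leakage — both routine once the main reduction is in hand.
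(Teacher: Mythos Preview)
Your proposal follows the paper's proof essentially step for step: reduce by symmetry to $\bold{X}_{1,1}$, drop all but the first column of $\bold{X}_{RP}$, recognize the resulting non-coherent SIMO channel \eqref{SIMO}, upper-bound by the coherent capacity (revealing $\bold{S}^{(:,1)}$), and apply Jensen with $\mathbb{E}\|\bold{S}^{(:,1)}\|_2^2=n'$ to reach $\tfrac12\log\bigl(1+\tfrac{n'}{f^2(X)+\sigma_{RP}^2}\bigr)\le\epsilon$. The paper handles your concern about the data-dependent $\sigma_{RP}$ exactly as you guess is needed: it restricts the supremum in \eqref{jlt:dp:max} to the class $\mathbb{P}$ of distributions supported on matrices with $f(\bold{X})\ge f(X)$, so that the bound $\sigma_\nu^2\ge f^2(X)+\sigma_{RP}^2$ holds uniformly.

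On the column-decoupling step you flag as hardest: the paper disposes of it in one sentence, asserting that the remaining columns ``do not have any term associated with $\bold{X}_{1,1}$ and they are conditionally independent,'' and then passes from $\bold{X}_{RP}$ to its first column with equality. Your observation that all columns share $\bold{S}$, so that conditioning on columns $\ell\ge 2$ leaks information about the interference $\sum_{i\neq 1}\bold{S}^{(:,i)}X_{i,1}$ inside $Z_1$, is well taken; the paper does not supply a more detailed justification than you do. In that sense your proposal is at least as complete as the published argument on precisely the point you were worried about.
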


\vspace*{-.3cm}

\begin{proof}
We show that the conditional mutual information \eqref{def:MIDP} is bounded by $\epsilon$ for this choice of $\sigma_{RP}^2$. Due to the symmetry of the problem, we fix $D_i$ to be the first feature of the first data point.
\begin{align}
&\max_{ P(\bold{X}) \in \mathbb{P} } I( \bold{X}_{1,1};  {\bold{X}}_{RP} | \bold{X}^{-(1,1) } ) \label{jlt:dp:max} \\
  &= \max_{ P(\bold{X}^{-(1,1)} ) p(\bold{X}_{1,1} | {X}^{-(1,1)} )  } \mathbb{E}_{ \bold{X}^{-(1,1)} } [ I( \bold{X}_{1,1}; {\bold{X}}_{RP} | \bold{X}^{-(1,1)}  = X^{-(1,1)} ) ], \nonumber\\  
& \stackrel{(a)}{\leq} \max_{ P(\bold{X}^{-(1,1)} ) } \mathbb{E}_{\bold{X}^{-(1,1)}} [  \max_{ P(\bold{X}_{1,1} | X^{-(1,1)} ) } I(  \bold{X}_{1,1}; {\bold{X}}_{RP} | \bold{X}^{-(1,1)}  = X^{-(1,1)}  )  ], \nonumber
\end{align} where $\mathbb{P}$ is the set of distributions which assign non-zero measure to $\bold{X}$ only if the absolute value of each entry {\color{black} is upper bounded by $1$ and $f(\bold{X})$ is lower bounded by the $f(.)$ evaluated for the original database}, $(a)$ follows from the Jensen's Inequality and the fact that maximization over the conditional distribution is a convex function. Now we find upper bounds on the the inside of the expectation, note that columns of ${\bold{X}}_{RP}$ rather than first one does not have any term associated with $\bold{X}_{1,1}$ and they are conditionally independent, therefore we can write
\begin{align}
& \max_{ P(\bold{X}_{1,1} | X^{-(1,1)} ) } I(  \bold{X}_{1,1}; {\bold{X}}_{RP} | \bold{X}^{-(1,1)}  = X^{-(1,1)}  ) \nonumber \\ 
&= \underbrace{ \max_{P(\bold{X}_{1,1} | X^{-(1,1)} )  } I( \bold{S} \bold{X}^{(:,1)} + \sigma_{RP} \bold{N}^{(:, 1)} ; \bold{X}_{1, 1} |\bold{X}^{-(1,1)}  = X^{-(1,1)}   ) }_{(\star)}, \nonumber
\end{align} where $X^{(:,1)}$ denotes the first column of $X$. We find an upper bound on $(\star)$ for a fixed set of $X^{-(1,1)}$ We observe that $(\star)$ is same as the capacity of the following non-coherent SIMO channel with Rayleigh fading with a unit power constraint:
\begin{align}
z = \bold{S}^{(:,1)} \bold{X}_{1,1} + \underbrace{\sum\limits_{i \neq 1} \bold{S}^{(:,i)} {X}_{i,1} + \sigma_{RP} \bold{N}^{(:,1)} }_{\nu}, \label{SIMO}
\end{align} where $z \in \mathbb{R}^{n'}$ is the first column of ${\bold{X}}_{RP}$ which we treat here as the output of the channel. Note that ${X}_{i,1}$ ( $i \neq 1$ ) are treated as constants for this channel and therefore $\nu$ is effectively a zero mean i.i.d. Gaussian noise with the covariance of
\begin{align}
\mathbb{E}[\nu \nu^T] = ( \sigma_{RP}^{2}  +  \sum\limits_{i \neq 1} ( {X}_{i,1} )^2 )  I_{n'} = \sigma_{\nu}^2 I_{n'}, \label{SIMO:var}   
\end{align} Now we bound the capacity of this channel, We use the coherent upper bound for the capacity of this channel:
\begin{align}
\max_{P(\bold{X}_{1,1} )} I( \bold{X}_{1,1} ; z )& \leq \max_{P(\bold{X}_{1,1}^1)} I( \bold{X}_{1,1}; z, \bold{S}^{(:,i)}  ) \nonumber \\
&\stackrel{(a)}\leq \mathbb{E}_{\bold{S}^{(:,i)} }[ \frac{1}{2} \log( 1 + \frac{ \| \bold{S}^{(:,i)}  \|^2 }{\sigma_{\nu}^2   } ) ] \nonumber \\
& \stackrel{(b)}\leq \frac{1}{2} \log( 1 + \mathbb{E}_{ \bold{S}^{(:,i)}}[ \frac{ \| \bold{S}^{(:,i)}  \|^2 }{f(X)^2 + \sigma_{RP}^2   } ] ) \stackrel{(c)}\leq \epsilon. 
\end{align} Note that the absolute value constraint implies the second moment constraint for all distribution defined on it and $(a)$ follows from the maximum entropy bound, $(b)$ follows directly from the Jensen's Inequality, $(c)$ comes from the fact that the outer maximization is over distributions that assign non-zero measure to $\bold{X}$ only if \mbox{$f( \bold{X} ) \geq f(X)$}.
\end{proof}

Now we derive the utility guarantee for this method. Note that by rewriting ${X}_{RP} = \begin{bmatrix} S & N \end{bmatrix} \begin{bmatrix} X \\ \sigma_{RP} I \end{bmatrix} = \tilde{S} \begin{bmatrix} X \\ \sigma_{RP} I \end{bmatrix} $ we observe that adding direct noise to the projected data can be interpreted as the random projection of the $l_2$ regularized least square problem (Ridge Regression), i.e.,
\begin{align}
\theta_{RP} &= \argmin_{\theta} \| {X}_{RP} \theta - {y}_{RP} \|_2^2 \\ 
&= \argmin_{\theta} \| \tilde{S} \underbrace{\big( \begin{bmatrix} X \\ \sigma_{RP} I \end{bmatrix} \theta - \begin{bmatrix} y \\ 0  \end{bmatrix}  \big)}_{\text{RR}} \|_2^2. 
\end{align} Therefore we can split the utility analysis into two parts,
\begin{enumerate}
\item What is the utility loss for the $l_2$ regularized least square? 
\item What is the utility loss for the randomized sketching (JLT)?
\end{enumerate}
We use the standard SVD argument to bound the Ridge Regression relative error and by following Pilanci et. al. \cite{Pilanci} (see Corollary 2) we give guarantees on the sketching step. The details of the proof are provided in \cite{Showkatbakhsh2018}.
\end{proof}

\section{Numerical Results}

We numerically evaluate the relative error $\eta$ achieved by the schemes in Section III subject to an $\epsilon$-MI-DP constraint. 

\begin{figure}
\centering
\includegraphics[scale=0.26]{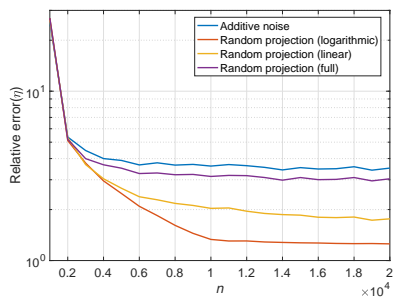}
\caption{Relative error of the schemes for $\epsilon=0.5$, for random data}
\label{fig:random_05}
\end{figure}

\begin{figure}
\centering
\includegraphics[scale=0.28]{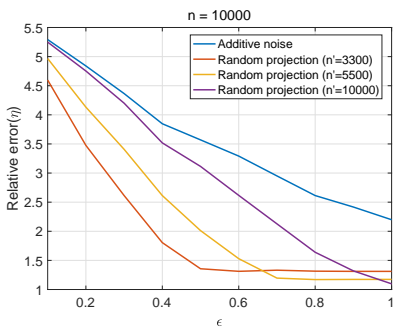}
\caption{Relative error vs. $\epsilon$, for $n=10000$, for random data}
\label{fig:eps_sweep}
\end{figure}

\subsection{Random data}
We generate the elements $X_{i,j}$ i.i.d. uniformly in the interval $[-1,1]$, where $X \in \mathbb{R}^{n \times 800}$, and $n = 1000k$ with $k \in \{ 1,2,\dots,20\}$. For each case, the additive noise parameter $\sigma_{AN}$ is computed according to \eqref{noisy:var}. Similarly, the additive noise $\sigma_{RP}$ is computed according to \eqref{jlt:var}. Given $k$, we evaluate three choices of $n'$: logarithmic ($n'_1 := 1000\lp\log\lp k\rp+1\rp$), linear ($n'_2 :=1000\frac{k+1}{2}$), and full ($n' = n = 1000k$). The resulting relative error curves are given in Figure~\ref{fig:random_05} for $\epsilon=0.5$, averaged over 5 trials. We note that random projection results in uniformly better privacy-utility trade-off compared to additive noise. Further, at this regime of $\epsilon$, lower projection dimensions result in significantly better trade-off. Figure~\ref{fig:eps_sweep} plots the achieved relative error as a function of $\epsilon$, for $n=10000$. We note that the relative error decreases linearly until it saturates for all schemes, and for stricter privacy constraints (small $\epsilon$), lower projection dimension achieves smaller relative error. As $\epsilon$ tends higher, the privacy constraint becomes less restrictive, and schemes with higher projection dimension perform better because of the additional rows of information.


\begin{figure}
\centering
\includegraphics[scale=0.25]{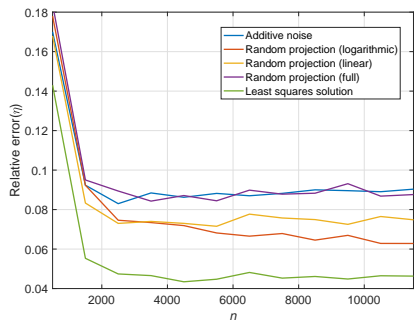}
\caption{Test error of the schemes for $\epsilon=0.2$, for MNIST}
\label{fig:mnist02}
\end{figure}

\subsection{MNIST Handwritten Digits Dataset}
We consider a reduced version of the MNIST hand-written digits dataset \cite{mnist}, where we only take the digits 4 and 9, leading to 11791 data samples, and only consider the 300 pixels that contain the most energy across these data samples. Mapping the digit labels to $+1$ and $-1$, and vectorizing each data image, we solve the corresponding linear problem, which generates a model that classifies 4's versus 9's. Figure~\ref{fig:mnist02} gives the resulting test error (subject to a 80\%/20\% training/test set partition) for the logarithmic ($n'_1 := 500\lp\log\lp k\rp+1\rp$), linear ($n'_2 :=500\frac{k+1}{2}$), and full ($n' = n = 1000k$) random projections, as well as additive noise. To evaluate values of $n$ smaller than 11791, we randomly sample the dataset. The results are averaged over 10 trials. Similar to the random case, we observe that random projection with logarithmic dimensions result in the best performance, while preserving MI-DP with $\epsilon=0.2$.

\bibliographystyle{ieeetr}
\bibliography{references}

\appendices
\section{Additive Gaussian Noise}

In order to derive bounds for the utility performance of additive noise, we use the perturbation theory in the least square setup \cite{Wedin}.
For a given $N$ and $\sigma_{AN}$ we have the following deterministic bound on the utility, 
\begin{lemma}[See Theorem 5.1 in \cite{Wedin}] 
Assuming $\rank(X) = \rank( X + \sigma_{AN} N ) = d$ and $ \kappa(X) \Delta(\epsilon,N,X) < 1$:
 \begin{align}
 \frac{\| X \theta_{AN} - y  \|_2}{\| X \theta^{\star} - y  \|_2} \leq 1 + \frac{ \kappa(X) \Delta(\epsilon,N,X) }{1 - \kappa(X) \Delta(\epsilon,N,X) } ( \kappa(X) + r(y) ), \label{noisy:utility:eq:noise}
 \end{align} where $\Delta(\epsilon, N, X)  = \sigma_{AN} \frac{ \| N \|_2}{\| X \|_2}$. \label{noisy:lemma:purt}
\end{lemma}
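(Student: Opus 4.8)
The lemma is the classical least-squares perturbation bound of \cite{Wedin} (Theorem 5.1 there), specialized to the case in which only the design matrix is perturbed, so the plan is to derive it from that result after matching notation. First I would set $E := \sigma_{AN} N$, so that $g_{AN}$ is an ordinary least-squares objective with design matrix $X+E$ and the \emph{same} response $y$; under the stated rank hypothesis both $\theta^{\star}=X^{\dagger}y$ and $\theta_{AN}=(X+E)^{\dagger}y$ are the unique minimizers, and the relevant relative perturbation is $\Delta = \|E\|_2/\|X\|_2 = \sigma_{AN}\|N\|_2/\|X\|_2$. Wedin's theorem, which allows perturbations of both the matrix and the right-hand side, then specializes --- set the right-hand-side perturbation to zero, and use the hypothesis $\kappa(X)\Delta<1$, which via Weyl's inequality also forces $\sigma_{\min}(X+E)\geq\sigma_{\min}(X)-\|E\|_2>0$ --- to exactly \eqref{noisy:utility:eq:noise}, with $\kappa(X)$ in the role of the conditioning factor and $r(y)=\|X\theta^{\star}\|_2/\|X\theta^{\star}-y\|_2$ as the normalized residual appearing in the theorem's residual estimate.

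If a self-contained derivation is preferred, I would argue as follows. Put $r:=X\theta^{\star}-y$, which is orthogonal to $\mathrm{col}(X)$. Decomposing $X\theta_{AN}-y = X(\theta_{AN}-\theta^{\star})+r$, the triangle inequality gives that the ratio is at most $1+\|X(\theta_{AN}-\theta^{\star})\|_2/\|r\|_2$, which accounts for the leading ``$1+$''. Since $X^{\dagger}X=I$ and $(X+E)^{\dagger}(X+E)=I$ (both matrices having full column rank), one has the exact identity $\theta_{AN}-\theta^{\star}=(X+E)^{\dagger}\bigl(y-(X+E)\theta^{\star}\bigr)=-(X+E)^{\dagger}(r+E\theta^{\star})$. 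Multiplying by $X$ and using $X(X+E)^{\dagger}=P_{X+E}-E(X+E)^{\dagger}$, with $P_{X+E}$ the orthogonal projector onto $\mathrm{col}(X+E)$, I would bound the $E\theta^{\star}$ part through $\|X(X+E)^{\dagger}\|_2\leq 1+\|E\|_2\|(X+E)^{\dagger}\|_2$, and the $r$ part through $P_X r=0$ together with the fact that for orthogonal projectors onto subspaces of equal dimension $\|P_{X+E}(I-P_X)\|_2=\|(I-P_{X+E})P_X\|_2\leq\|E\|_2\|X^{\dagger}\|_2$. Feeding in $\|(X+E)^{\dagger}\|_2\leq\|X^{\dagger}\|_2/(1-\kappa(X)\Delta)$ from Weyl, $\|\theta^{\star}\|_2\leq\|X^{\dagger}\|_2\|X\theta^{\star}\|_2$ (because $\theta^{\star}$ lies in the row space of $X$), and the identities $\|E\|_2\|X^{\dagger}\|_2=\kappa(X)\Delta$ and $\|X\theta^{\star}\|_2=r(y)\|r\|_2$, one arrives at a bound of the stated shape $1+\tfrac{\kappa(X)\Delta}{1-\kappa(X)\Delta}\bigl(\kappa(X)+r(y)\bigr)$.

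The one genuinely delicate point is recovering the precise constants, in particular why the residual contribution carries $\kappa(X)$ rather than $\kappa(X)^2$. The crude route --- bounding $\|X(\theta_{AN}-\theta^{\star})\|_2\leq\|X\|_2\|\theta_{AN}-\theta^{\star}\|_2$ and then invoking the textbook \emph{solution}-error bound --- inflates that term by a factor of $\kappa(X)$; avoiding this requires estimating the fitted-value perturbation $X(\theta_{AN}-\theta^{\star})$ directly via the identity above, without passing through $\|\theta_{AN}-\theta^{\star}\|_2$. Since that is exactly the content of Wedin's residual estimate, the cleanest route is to cite Theorem 5.1 of \cite{Wedin} for the constants and supply the reduction above as the bridge to our notation; the remaining ingredients --- orthogonality of $r$, the Weyl bound on $\sigma_{\min}(X+E)$, and the row-space containment of $\theta^{\star}$ --- are routine.
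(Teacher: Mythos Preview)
Your proposal is correct and matches the paper's treatment: the paper does not prove this lemma at all but simply states it as a direct citation of Theorem~5.1 in \cite{Wedin}, which is exactly the route you recommend in your first paragraph. Your self-contained sketch goes well beyond what the paper provides, and your caveat about recovering the precise constants (and hence ultimately deferring to Wedin for them) is appropriate.
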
 It is well-known that the maximum singular value of $N \in \mathbb{R}^{n \times d}$ converges almost surely to $\sqrt{n} + \sqrt{d}$ asymptotically. For the non-asymptotic bounds, we use the following lemma:

\begin{proposition}[See \cite{Rudelson}]
If $N \in \mathbb{R}^{n \times d}$ is a Gaussian random matrix with entries drawn from $\mathcal{N}(0, 1)$, then
\begin{align}
P(  \sigma_{\max}(N) \leq \sqrt{n} + \sqrt{d} + t )  \geq 1-2e^{-\frac{t^2}{2}}, \quad t \geq 0. \label{noisy:sbound}
\end{align} \label{noisy:prop:sbound}
\end{proposition}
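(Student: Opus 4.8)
The plan is to combine two classical facts about Gaussian matrices: a sharp bound on the expected operator norm obtained via a Gaussian comparison inequality, and dimension‑free concentration of the operator norm around its mean coming from the Gaussian Lipschitz property.

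First I would write $\sigma_{\max}(N) = \|N\|_2 = \sup_{u \in S^{n-1},\, v \in S^{d-1}} u^\top N v$, which exhibits $\|N\|_2$ as the supremum of the centered Gaussian process $X_{u,v} := u^\top N v$. This process has unit variance for every $(u,v)$ and increments $\mathbb{E}(X_{u,v} - X_{u',v'})^2 = \|uv^\top - u'v'^\top\|_F^2 = 2 - 2(u^\top u')(v^\top v')$ on the product of spheres. I would compare it to the decoupled process $Y_{u,v} := g^\top u + h^\top v$, where $g \sim \mathcal{N}(0,I_n)$ and $h \sim \mathcal{N}(0,I_d)$ are independent; a one‑line computation gives $\mathbb{E}(Y_{u,v} - Y_{u',v'})^2 - \mathbb{E}(X_{u,v} - X_{u',v'})^2 = 2(1 - u^\top u')(1 - v^\top v') \ge 0$. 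Hence the Sudakov--Fernique inequality yields $\mathbb{E}\|N\|_2 = \mathbb{E}\sup_{u,v} X_{u,v} \le \mathbb{E}\sup_{u,v} Y_{u,v} = \mathbb{E}\|g\|_2 + \mathbb{E}\|h\|_2 \le \sqrt{n} + \sqrt{d}$, the last step by Jensen's inequality applied to the concave function $\sqrt{\,\cdot\,}$.

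Second, I would invoke concentration of Lipschitz functions of a standard Gaussian vector. Viewing $N$ as an element of $\mathbb{R}^{nd}$ with the Euclidean (Frobenius) metric, the map $N \mapsto \|N\|_2$ is $1$‑Lipschitz, since $\bigl|\,\|N\|_2 - \|N'\|_2\,\bigr| \le \|N - N'\|_2 \le \|N - N'\|_F$. The Gaussian (Borell--TIS) concentration inequality then gives $\Pr\bigl(\|N\|_2 \ge \mathbb{E}\|N\|_2 + t\bigr) \le e^{-t^2/2}$ for all $t \ge 0$, and chaining this with the mean bound from the first step produces $\Pr\bigl(\sigma_{\max}(N) \ge \sqrt{n} + \sqrt{d} + t\bigr) \le e^{-t^2/2}$. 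This is in fact slightly stronger than the stated bound with the factor $2$; the $2$ is harmless and is natural if one wants the matching two‑sided statement, which one can get from the companion lower bound $\mathbb{E}\,\sigma_{\min}(N) \ge \sqrt{n} - \sqrt{d}$ provided by Gordon's inequality together with the same concentration estimate.

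The main obstacle is obtaining the sharp mean bound $\mathbb{E}\|N\|_2 \le \sqrt{n} + \sqrt{d}$: a naive $\varepsilon$‑net and union‑bound argument over the two spheres only delivers $C(\sqrt{n} + \sqrt{d})$ for some absolute constant $C > 1$, which is too weak for the constants claimed here, so the Gaussian comparison step (Slepian / Sudakov--Fernique or Gordon) is genuinely needed rather than decorative. Everything else — the variational formula, the increment identity, the $1$‑Lipschitz estimate, and the citation of Gaussian concentration — is routine, so I would state those briefly and point to the standard references.
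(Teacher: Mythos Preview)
Your argument is correct and is exactly the standard proof of this concentration bound for the operator norm of a Gaussian matrix: Sudakov--Fernique (or Gordon) for the sharp mean bound $\mathbb{E}\|N\|_2 \le \sqrt{n}+\sqrt{d}$, followed by Gaussian Lipschitz concentration for the sub-Gaussian tail. Your observation about the constant is also right: the one-sided upper tail only needs the factor $1$; the $2$ in the paper's statement is the cost of the companion lower-tail (or $\sigma_{\min}$) bound and is harmless for how the result is used.

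The paper itself does not prove this proposition; it is stated as a quoted result with a pointer to \cite{Rudelson} and then immediately combined with the deterministic perturbation bound (their Lemma from \cite{Wedin}) to obtain Theorem~1. So there is nothing to compare against: you have supplied a full proof where the paper only cites one. Your commentary that an $\varepsilon$-net argument alone would lose the sharp constants, making the Gaussian comparison step essential, is accurate and worth keeping.
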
 By combining Lemma \ref{noisy:lemma:purt} and Proposition \ref{noisy:prop:sbound} and the choice of $\sigma_{AN}$ \eqref{noisy:utility:eq:noise}, Theorem \ref{noisy:thm} directly follows.

\section{Gaussian Random Projections}

In this section, we derive utility guarantee on the performance of random projection for the given value of $\sigma_{RP}$. Note that by rewriting ${X}_{RP} = \begin{bmatrix} S & N \end{bmatrix} \begin{bmatrix} X \\ \sigma_{RP} I \end{bmatrix} = \tilde{S} \begin{bmatrix} X \\ \sigma_{RP} I \end{bmatrix} $ we observe that adding direct noise to the projected data can be interpreted as the random projection of the $l_2$ regularized least square problem (Ridge Regression), i.e.,
\begin{align}
\theta_{RP} &= \argmin_{\theta} \| {X}_{RP} \theta - {y}_{RP} \|_2^2 \\ 
&= \argmin_{\theta} \| \tilde{S} \underbrace{\big( \begin{bmatrix} X \\ \sigma_{RP} I \end{bmatrix} \theta - \begin{bmatrix} y \\ 0  \end{bmatrix}  \big)}_{\text{RR}} \|_2^2,
\end{align} Let us denote the solution to the Ridge Regression problem with $\theta_{RR} = \argmin_{\theta} \| X \theta - y \|^2 + \sigma_{RP}^2 \| \theta \|^2$, therefore we can write:
\begin{align}
\frac{\| X \theta_{RP} - y \|_2^2}{\| X \theta^{\star} - y \|_2^2} &=  \underbrace{\frac{\| X \theta_{RR} - y \|_2^2}{\| X \theta^{\star} - y \|_2^2}}_{\eta_1}   \nonumber \\
& \times  \underbrace{\frac{\| X \theta_{RR} - y \|_2^2 + \sigma_{RP}^2 \| \theta_{RR} \|_2^2}{\| X \theta_{RR} - y \|_2^2}}_{\eta_2}   \nonumber  \\ 
& \times  \underbrace{ \frac{\| X \theta_{RP} - y \|_2^2 + \sigma_{RP}^2 \| \theta_{RP} \|_2^2}{\| X \theta_{RR} - y \|_2^2 + \sigma_{RP}^2 \| \theta_{RR} \|_2^2} }_{\eta_3} \nonumber \\
& \times \underbrace{ \frac{\| X \theta_{RP} - y  \|_2^2}{\| X \theta_{RP} - y \|_2^2 + \sigma_{RP}^2 \| \theta_{RP} \|_2^2} }_{\eta_4}.
\end{align} It is clear that $\eta_4 < 1$, therefore we find bounds for each of $\eta_1$, $\eta_2$ and $\eta_3$.

Using the following Lemma, $\eta_1 \leq \lp 1 + \frac{\sigma^2_{RP}}{ \sigma^2_{\text{min}} + \sigma^2_{RP} } r(y) \rp^2$.

\begin{lemma}
Let us denote the solution to the $l_2$ regularized least square problem with $\theta_{RR}(\lambda) := \argmin_{\theta} \|X\theta - y\|_2^2 + \lambda\| \theta \|_2^2$ and $\theta^{\star} = \argmin_{\theta} \| X \theta - y \|_2^2$, then we have the following bound on the empirical risk loss given that $X$ is full rank:
\begin{align}
\frac{\| X \theta_{RR}(\lambda) - y  \|_2}{ \| X \theta^{\star} - y  \|_2}
& \leq 1 + \frac{\lambda}{ \sigma^2_{\text{min}} + \lambda } r(y). \label{jlt:RRlemma:eq}
\end{align} \label{jlt:RRlemma}
\end{lemma}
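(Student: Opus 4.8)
The plan is to diagonalize everything through the singular value decomposition of $X$ and reduce the claim to a coordinatewise inequality. Write the thin SVD $X = U\Sigma V^T$ with $U \in \mathbb{R}^{n\times d}$, $V\in\mathbb{R}^{d\times d}$ having orthonormal columns and $\Sigma = \mathrm{diag}(\sigma_1,\dots,\sigma_d)$, where $\sigma_d = \sigma_{\min} > 0$ since $X$ has full column rank. Then $X\theta^{\star} = U U^T y$ is the orthogonal projection of $y$ onto $\mathrm{col}(X)$, and the normal equations for ridge regression give $\theta_{RR}(\lambda) = (X^T X + \lambda I)^{-1} X^T y = V(\Sigma^2 + \lambda I)^{-1}\Sigma\, U^T y$, so that $X\theta_{RR}(\lambda) = U D U^T y$ with $D := \mathrm{diag}\big(\sigma_i^2/(\sigma_i^2+\lambda)\big)$.

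Next I would split $y$ into the component $y_{\parallel} := U U^T y \in \mathrm{col}(X)$ and $y_{\perp} := (I - U U^T)y$ in the orthogonal complement, and write the ridge residual as $X\theta_{RR}(\lambda) - y = U(D - I)U^T y - y_{\perp}$. The first term lies in $\mathrm{col}(U)$ and the second in its orthogonal complement, so by the Pythagorean theorem $\|X\theta_{RR}(\lambda) - y\|_2^2 = \|(D-I)U^T y\|_2^2 + \|y_{\perp}\|_2^2$. Since the $i$-th diagonal entry of $D - I$ equals $-\lambda/(\sigma_i^2+\lambda)$ and $\lambda/(\sigma_i^2+\lambda) \le \lambda/(\sigma_{\min}^2+\lambda)$ for every $i$, I get $\|(D-I)U^T y\|_2 \le \frac{\lambda}{\sigma_{\min}^2+\lambda}\|U^T y\|_2 = \frac{\lambda}{\sigma_{\min}^2+\lambda}\|y_{\parallel}\|_2$.

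Finally I would identify $\|y_{\parallel}\|_2 = \|X\theta^{\star}\|_2$ and $\|y_{\perp}\|_2 = \|X\theta^{\star} - y\|_2$, so that $\|X\theta_{RR}(\lambda) - y\|_2^2 \le \big(\frac{\lambda}{\sigma_{\min}^2+\lambda}\big)^2 \|X\theta^{\star}\|_2^2 + \|X\theta^{\star} - y\|_2^2$; applying the elementary bound $\sqrt{a^2+b^2}\le a+b$ for $a,b\ge 0$ and dividing by $\|X\theta^{\star} - y\|_2$ yields \eqref{jlt:RRlemma:eq} with $r(y) = \|X\theta^{\star}\|_2/\|X\theta^{\star}-y\|_2$. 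There is no substantial obstacle: the only things to watch are keeping the two orthogonal pieces of the residual separate so the cross term vanishes, applying $\lambda/(\sigma_i^2+\lambda)\le\lambda/(\sigma_{\min}^2+\lambda)$ in the correct direction, and using the non-tight inequality $\sqrt{a^2+b^2}\le a+b$, which is precisely what converts the Pythagorean identity into the clean additive ``$1 + \cdots$'' form in the statement.
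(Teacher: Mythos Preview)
Your proof is correct and follows essentially the same SVD-diagonalization route as the paper: both compute $X\theta_{RR}(\lambda)=U\,\mathrm{diag}\big(\sigma_i^2/(\sigma_i^2+\lambda)\big)U^T y$ and bound the resulting diagonal by $\lambda/(\sigma_{\min}^2+\lambda)$. The only cosmetic difference is that the paper applies the triangle inequality to $\|X\theta^\star - y + X(\theta_{RR}-\theta^\star)\|_2$ directly, whereas you first observe these two pieces are orthogonal (Pythagoras) and then relax via $\sqrt{a^2+b^2}\le a+b$; since the two summands are indeed orthogonal, the two arguments are equivalent.
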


\begin{proof}
Using triangle inequality we can write the LHS of \eqref{jlt:RRlemma:eq}:
\begin{align}
\frac{\| X \theta^{\star} - y  +  X ( \theta_{RR}(\lambda) - \theta^{\star} )  \|_2}{\| X \theta^{\star} - y  \|_2} \leq 
1 + \frac{\| X ( \theta_{RR}(\lambda) - \theta^{\star} ) \|_2}{\| X \theta^{\star} - y  \|_2}. \label{jlt:RRlemma:proof:eq1}
\end{align} Let us denote the SVD decomposition of $X$ by $X = U \Sigma V^T$, where $U \in \mathbb{R}^{n \times d}$ spans the column space, $\Sigma \in \mathbb{R}^{d \times d}$ is the diagonal matrix of the singular values and $V \in \mathbb{R}^{d \times d}$ spans the row space of $X$. We use the close form solution for $\theta^{\star}$ and $\theta_{RR}$ to derive bounds for $\| X ( \theta_{RR}(\lambda) - w^{\star} ) \|_2$.
\begin{align}
\theta^{\star} &= (X^T X )^{-1} X^T y = V \Sigma^{-1} U^T y \\
\theta_{RR} &= (X^T X + \lambda I )^{-1} X^T y = V( \Sigma^2 + \lambda I )^{-1} \Sigma U^T y,
\end{align} therefore
\begin{align}
\| X ( \theta_{RR}(\lambda) - \theta^{\star} ) \|_2 &= \| U \underbrace{\Sigma [ ( \Sigma^2 + \lambda I )^{-1} - \Sigma^{-2}  ] \Sigma}_{- D} U^T y  \|_2 \nonumber \\
& \leq  \| U D U^T y \|_2 \nonumber \\
& \stackrel{(a)}\leq \sigma_{\max}( D ) \| U^{T} y \|_2 \nonumber \\
& \stackrel{(b)}= \lp \frac{\lambda}{ \sigma^2_{\min} + \lambda}  \rp \| X \theta^{\star} \|_2. \label{jlt:RRlemma:proof:eq2}
\end{align} $(a)$ and $(b)$ follow directly since  $D$ is a diagonal matrix with $i$-th entry of $\frac{\lambda}{ \sigma^2_i + \lambda}$, where $\sigma_i$ is $i$-th singular value of $X$ so $\sigma_{\max}( D ) \leq \frac{\lambda}{ \sigma^2_{\min}+ \lambda}$.
By combining \eqref{jlt:RRlemma:proof:eq1} and \eqref{jlt:RRlemma:proof:eq2}, \eqref{jlt:RRlemma:eq} follows directly.
\end{proof}

\begin{corollary}
Let us denote the solution to the $l_2$ regularized least square problem with $\theta_{RR}(\lambda) := \argmin_{\theta} \|X\theta - y\|_2^2 + \lambda\| \theta \|_2^2$ and $\theta^{\star} = \argmin_{\theta} \| X \theta - y \|_2^2$, we have the following bound on the norm of the $\theta_{RR}$: \label{jlt:RRcor}
\begin{align}
{\| \theta_{RR} \|_2} \leq \lp \max_{i} \frac{\sigma_i}{\sigma_i^2 + \lambda} \rp \| X \theta^{\star} - y \|_2,
\end{align}
\end{corollary}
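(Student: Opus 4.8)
The plan is to compute $\theta_{RR}(\lambda)$ in closed form through the SVD of $X$, bound its Euclidean norm by the largest entry of the associated diagonal ``filter,'' and then identify the resulting factor $\|U^T y\|_2$ with the norm appearing on the right-hand side. This reuses the SVD setup already introduced in the proof of Lemma~\ref{jlt:RRlemma}.

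First I would write $X = U\Sigma V^T$ with $U \in \mathbb{R}^{n\times d}$ having orthonormal columns, $\Sigma$ diagonal, and $V$ orthogonal, and use the normal-equation solution $\theta_{RR}(\lambda) = (X^TX + \lambda I)^{-1} X^T y = V(\Sigma^2 + \lambda I)^{-1}\Sigma\,U^T y$. Since $V$ is orthogonal it preserves norms, and $(\Sigma^2 + \lambda I)^{-1}\Sigma$ is diagonal with $i$-th entry $\frac{\sigma_i}{\sigma_i^2 + \lambda}$, so taking norms gives
\begin{align}
\|\theta_{RR}(\lambda)\|_2 = \big\| (\Sigma^2 + \lambda I)^{-1}\Sigma\, U^T y \big\|_2 \leq \Big( \max_i \frac{\sigma_i}{\sigma_i^2 + \lambda} \Big)\, \| U^T y \|_2 .
\end{align}
Because the columns of $U$ are orthonormal and $X\theta^\star = UU^T y$ is the orthogonal projection of $y$ onto the column space of $X$, we have $\|U^T y\|_2 = \|UU^T y\|_2 = \|X\theta^\star\|_2$, which produces the bound in terms of the projection norm.

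The main obstacle is the final identification of $\|U^T y\|_2 = \|X\theta^\star\|_2$ with the residual norm $\|X\theta^\star - y\|_2$ that the corollary states. These two quantities are the two legs of the right triangle $\|y\|_2^2 = \|X\theta^\star\|_2^2 + \|X\theta^\star - y\|_2^2$ and differ exactly by the factor $r(y) = \|X\theta^\star\|_2 / \|X\theta^\star - y\|_2$ defined in the notation. Substituting $\|X\theta^\star\|_2 = r(y)\,\|X\theta^\star - y\|_2$ yields $\|\theta_{RR}(\lambda)\|_2 \leq \big(\max_i \frac{\sigma_i}{\sigma_i^2+\lambda}\big) r(y)\,\|X\theta^\star - y\|_2$, so the route above naturally carries an extra $r(y)$ factor; reaching the stated form without it would require $r(y) \leq 1$, or equivalently reading the right-hand side as $\|X\theta^\star\|_2$. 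I would therefore establish the bound in the $\|X\theta^\star\|_2$ form and then track the $r(y)$ factor explicitly into its downstream use, where (in the bound for $\eta_2$) the companion $r(y)$ indeed appears, matching the $r^2(y)$ in $l_1(X,\epsilon)$.
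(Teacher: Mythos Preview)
Your approach is essentially identical to the paper's: write $\theta_{RR}=V(\Sigma^2+\lambda I)^{-1}\Sigma\,U^Ty$, bound by the largest diagonal entry of $(\Sigma^2+\lambda I)^{-1}\Sigma$, and use $\|U^Ty\|_2=\|X\theta^\star\|_2$. You are also right about the mismatch: the paper's own proof terminates at $\lp\max_i\frac{\sigma_i}{\sigma_i^2+\lambda}\rp\|X\theta^\star\|_2$, not $\|X\theta^\star-y\|_2$, and the subsequent bound $\eta_2\le 1+\sigma_{RP}^2\lp\max_i\frac{\sigma_i}{\sigma_i^2+\sigma_{RP}^2}\rp^2 r^2(y)$ (hence the $r^2(y)$ in $l_1(X,\epsilon)$) is exactly what one gets from the $\|X\theta^\star\|_2$ version after dividing by $\|X\theta_{RR}-y\|_2^2\ge\|X\theta^\star-y\|_2^2$; so the statement as printed is a typo and your tracking of the extra $r(y)$ into the downstream use is the correct reading.
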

\begin{proof}
Proof directly follows by using the closed form solution for $\theta_{RR}$, 
\begin{align}
\|  \theta_{RR} \| &= \| V ( \Sigma^2 + \lambda I )^{-1} \Sigma U^T y \|_2 \nonumber \\
&=  \| \underbrace{( \Sigma^2 + \lambda I )^{-1} }_{D'} \Sigma U^T y \|_2, \nonumber \\
&\leq \sigma_{\max}( D' ) \| U^T y \|_2 \\
&= \lp \max_{i} \frac{\sigma_i}{\sigma_i^2 + \lambda} \rp \| X \theta^{\star} \|_2.
\end{align}
\end{proof}

By Corollary \ref{jlt:RRcor} we have the following bound on $\eta_2$:
\begin{align}
\eta_2 \leq 1 + \sigma_{RP}^2 \lp \max_{i} \frac{\sigma_i}{\sigma_i^2 + \sigma_{RP}^2} \rp^2 r^2(y),
\end{align}

We use results of Pilanci et. al. \cite{Pilanci} for $\eta_3$:

\begin{proposition}[See Corollary 2 in \cite{Pilanci}]
Suppose \mbox{ $ \theta_{RP} = \argmin_{\theta} \| \tilde{\bold{S}} \big( \begin{bmatrix} X \\ \sigma_{RP} I \end{bmatrix} \theta - \begin{bmatrix} y \\ 0  \end{bmatrix}  \big) \|_2^2$} and \mbox{$\theta_{RR} := \argmin_{\theta} \|X\theta - y\|_2^2 + \sigma_{RP}^2 \| \theta \|_2^2$} where $\tilde{\bold{S}} \in \mathbb{R}^{n' \times n}$ is a random Gaussian matrix with entries drawn from $\mathcal{N}(0,1)$. With probability at least $1 - c_1 e^{-c_2 n' \delta^2}$ for $\delta \geq \sqrt{c_0 \frac{d}{n'}}$:
\begin{align}
\frac{\| X \theta_{RP} - y \|_2^2 + \sigma_{RP}^2 \| \theta_{RP} \|_2^2}{\| X \theta_{RR} - y \|_2^2 + \sigma_{RP}^2 \| \theta_{RR} \|_2^2}   & \leq ( 1 + \delta  )^2,
\end{align} where $c_0$, $c_1$ and $c_2$ are constants.
\end{proposition}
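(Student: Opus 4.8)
This is the standard ``sketch-and-solve'' guarantee for least squares, specialized to the augmented matrix that encodes the Ridge Regression problem. Write $A := \begin{bmatrix} X \\ \sigma_{RP} I \end{bmatrix}$ and $b := \begin{bmatrix} y \\ 0 \end{bmatrix}$, so that $\theta_{RR} = \argmin_{\theta}\| A\theta - b \|_2^2$ and $\theta_{RP} = \argmin_{\theta}\| \tilde{\bold{S}}(A\theta - b) \|_2^2$. Since $\| A\theta - b \|_2^2 = \| X\theta - y\|_2^2 + \sigma_{RP}^2\|\theta\|_2^2$, the two quantities in the claimed ratio are exactly $\|A\theta_{RP}-b\|_2^2$ and $\|A\theta_{RR}-b\|_2^2$, and $\theta_{RR}$ is the unique minimizer of a strongly convex objective (as $X$ has full column rank), so it suffices to prove $\|A\theta_{RP}-b\|_2^2 \le (1+\delta)^2 \|A\theta_{RR}-b\|_2^2$ with the stated probability.

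The plan is as follows. First I would let $\mathcal{U} \subseteq \mathbb{R}^{n+d}$ be the (at most) $(d+1)$-dimensional subspace spanned by the columns of $A$ together with $b$, so that every residual $A\theta-b$ lies in $\mathcal{U}$. Second, I would invoke the Gaussian subspace-embedding property: if $n' \gtrsim (d+1)/\delta^2$ — which is exactly the hypothesis $\delta \ge \sqrt{c_0 d/n'}$ — then with probability at least $1 - c_1 e^{-c_2 n'\delta^2}$ one has $\tfrac{1}{\sqrt{n'}}\|\tilde{\bold{S}} u\|_2 = (1\pm\delta)\|u\|_2$ simultaneously for all $u \in \mathcal{U}$. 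This follows from a covering-number argument: place an $\epsilon$-net of size $e^{O(d)}$ on the unit sphere of $\mathcal{U}$, apply scalar Gaussian concentration of $\|\tilde{\bold{S}} u\|_2^2$ around $n'$ at each net point, and take a union bound; the exponent $n'\delta^2$ dominates $d$ precisely under the stated condition on $\delta$.

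Third, I would transfer objective values through the embedding. By optimality of $\theta_{RP}$ for the sketched problem, $\|\tilde{\bold{S}}(A\theta_{RP}-b)\|_2^2 \le \|\tilde{\bold{S}}(A\theta_{RR}-b)\|_2^2$; bounding the left side below by $n'(1-\delta)^2\|A\theta_{RP}-b\|_2^2$ and the right side above by $n'(1+\delta)^2\|A\theta_{RR}-b\|_2^2$ gives $\|A\theta_{RP}-b\|_2^2 \le \bigl(\tfrac{1+\delta}{1-\delta}\bigr)^2\|A\theta_{RR}-b\|_2^2$. To sharpen the constant to the stated $(1+\delta)^2$, I would instead follow the refined argument of Pilanci and Wainwright: use the first-order condition $\tilde{\bold{S}}^{\top}\tilde{\bold{S}}(A\theta_{RP}-b)\perp\operatorname{range}(A)$, write $A\theta_{RP}-b = (A\theta_{RR}-b)+Av$ with $v := \theta_{RP}-\theta_{RR}$, and control separately (i) the quadratic-form deviation $\sup_{v}\bigl|\tfrac1{n'}\|\tilde{\bold{S}}Av\|_2^2 - \|Av\|_2^2\bigr|$ and (ii) the cross-term deviation $\bigl|\tfrac1{n'}\langle\tilde{\bold{S}}(A\theta_{RR}-b),\tilde{\bold{S}}Av\rangle - \langle A\theta_{RR}-b,Av\rangle\bigr|$, both of which the subspace embedding forces to be $O(\delta)$ on the relevant unit balls; combining these and solving the resulting quadratic inequality in $\|Av\|_2/\|A\theta_{RR}-b\|_2$ yields the claim, after folding lower-order terms into $c_0,c_1,c_2$.

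The main obstacle is exactly this last step. The plain subspace-embedding argument only delivers $\bigl(\tfrac{1+\delta}{1-\delta}\bigr)^2$; obtaining the clean $(1+\delta)^2$ requires the two-quantity analysis (separating the quadratic-form and cross-term deviations rather than bounding the residual norm directly), which is the content of Corollary 2 in \cite{Pilanci} and which we invoke verbatim. The augmented reformulation and the Gaussian embedding itself are routine.
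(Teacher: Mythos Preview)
Your sketch is correct and in fact goes further than the paper does: the paper does not prove this proposition at all but simply quotes it as Corollary~2 of \cite{Pilanci}, so there is nothing to compare against beyond the citation itself. Your reduction to the augmented least-squares problem $A=\begin{bmatrix}X\\\sigma_{RP}I\end{bmatrix}$, $b=\begin{bmatrix}y\\0\end{bmatrix}$ matches exactly how the paper sets things up before invoking the reference, and your identification of the gap between the crude $\bigl(\tfrac{1+\delta}{1-\delta}\bigr)^2$ bound and the sharp $(1+\delta)^2$ bound (requiring the separate quadratic-form and cross-term control from \cite{Pilanci}) is precisely the content of the cited corollary. One small dimensional remark: $\tilde{\bold{S}}=[S\;\;N]$ actually lies in $\mathbb{R}^{n'\times(n+d)}$ (the proposition's stated $\mathbb{R}^{n'\times n}$ is a typo in the paper), which you implicitly have right since you take $\mathcal{U}\subseteq\mathbb{R}^{n+d}$.
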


Result of Theorem \ref{jlt:thm} follows directly by using bounds on $\eta_1$, $\eta_2$ and $\eta_3$.

\IEEEtriggeratref{3}


\end{document}